\theoremstyle{definition}
\newtheorem{definition}{Definition}[section]
\newtheorem{theorem}{Theorem}[section]
\icmltitlerunning{Reward Shaping via Meta-Learning}
\begin{document}

\twocolumn[
\icmltitle{Reward Shaping via Meta-Learning}



\icmlsetsymbol{equal}{*}

\begin{icmlauthorlist}
\icmlauthor{Haosheng Zou}{equal,to}
\icmlauthor{Tongzheng Ren}{equal,to}
\icmlauthor{Dong Yan}{to}
\icmlauthor{Hang Su}{to}
\icmlauthor{Jun Zhu}{to}
\end{icmlauthorlist}

\icmlaffiliation{to}{Dept. of Comp. Sci. \& Tech., State Key Lab of Intell. Tech. \& Sys., TNList Lab, CBICR Center \\
Tsinghua University, Beijing, China}

\icmlcorrespondingauthor{Jun Zhu}{dcszj@mail.tsinghua.edu.cn}
\icmlcorrespondingauthor{Haosheng Zou}{zouhs16@mails.tsinghua.edu.cn}
\icmlcorrespondingauthor{Tongzheng Ren}{rtz19970824@gmail.com}

\icmlkeywords{Machine Learning, ICML}

\vskip 0.3in
]



\printAffiliationsAndNotice{\icmlEqualContribution} 

\begin{abstract}
Reward shaping is one of the most effective methods to tackle the crucial yet challenging problem of \emph{credit assignment} in Reinforcement Learning (RL). However, designing shaping functions usually requires much expert knowledge and hand-engineering, 
and the difficulties are further exacerbated given multiple similar tasks to solve.
In this paper, we consider reward shaping on a distribution of tasks, and propose a general meta-learning framework to automatically learn the efficient reward shaping on newly sampled tasks, assuming \emph{only} shared state space but not necessarily action space. 
We first derive the theoretically \emph{optimal} reward shaping in terms of credit assignment in model-free RL. We then propose a value-based meta-learning algorithm to extract an effective prior over the optimal reward shaping. The prior can be applied directly to new tasks, or \emph{provably} adapted to the task-posterior while solving the task within few gradient updates.
We demonstrate the effectiveness of our shaping 
through significantly improved
learning efficiency and interpretable visualizations across various settings, including notably a successful transfer from DQN to DDPG.

\vspace{-2ex}
\end{abstract}

\section{Introduction}
Reinforcement Learning (RL) has recently attracted much attention with its success in various domains such as Atari \citep{mnih2015human} and Go \citep{silver2017mastering}. 
However, the problem of \emph{credit assignment} \citep{minsky1961steps} still troubles its learning efficiency.
It is rather difficult for RL agents to answer the following question: 
how to distribute credit for success (or penalty for failure) among the sequence of decisions involved in producing the result from naturally delayed (even sparse) rewards. If the agent could know exactly which actions are right or wrong, RL would be no more difficult than supervised learning. 
Such inefficiency in credit assignment is one major reason for the unsatisfactory learning efficiency of current model-free RL methods.


Reward shaping is one of the most intuitive, popular and effective
solutions to credit assignment, whose very goal is to shape the original delayed rewards to properly reward or penalize intermediate actions as in-time credit assignment.
The technique first emerges in animal training~\citep{skinner1990behavior}, and is then introduced to RL \citep{dorigo1994robot, mataric1994reward} to tackle increasingly complex problems like Doom \citep{wu2016training} and Dota 2 \cite{openaifive}. 
While most shaping functions could be directly applied,
it is proved that optimal policies remain invariant under certain ones, namely potential-based shaping functions \citep{ng1999policy}. 

However, almost all reward shapings are \emph{hand-crafted} and need to be carefully designed by experienced human experts \citep{wu2016training,openaifive}. On one hand, coding those shaping functions in programming languages is potentially tedious and inconvenient especially in complex large-scale environments such as Doom \citep{wu2016training} and Dota 2 \citep{openaifive}. On the other hand, humans have to theoretically justify the shaping rewards to ensure that they lead to expected behavior but not other local optima. 
Together this makes effective reward shapings hard to design/code, and easily coded shapings usually ineffective.

Furthermore, in practice we are usually interested in solving multiple similar tasks as a whole. For example, when training an RL agent to solve 2D grid mazes, we wouldn't like to train individual agents for each maze map, but would naturally hope for \emph{one} general agent for all possible mazes. 
The shared but not identical task-structures naturally induce a \emph{distribution over tasks}, which in this case is a distribution over maze configurations \cite{wilson2007multi} and could elsewhere be a distribution over system parameters \cite{lazaric2010bayesian} for different robot-hand sizes or over game maps for RTS games \cite{jaderberg-ctf-2018}.
The ability to quickly solve new similar tasks drawn from such distributions is much expected for general intelligence, since it is mastered by human infants quite young \cite{smith2017developmental}.
However, the human effort in reward shaping would be further exacerbated, where we have to either design a different shaping per task or come up with a general task-dependent function presumably harder to design.


To this end, we consider the generally hard problem of reward shaping on a distribution of tasks.
Motivated by the inconvenience in reward shaping under task multiplicity, we seek to design a \emph{general, automatic} reward shaping mechanism that works well on the task distribution without hand-engineering of human experts.
We first derive the theoretically \emph{optimal} reward shaping in terms of credit assignment in model-free RL to be the optimal V-values.
By spotting that there exists shared knowledge across tasks on the same distribution,
we then propose a novel value-based algorithm based on Model-Agnostic Meta-Learning (MAML) \citep{finn2017one},
leveraging meta-learning to extract such prior knowledge.
This \textbf{prior} approximates the optimal potential-based shaping function \citep{ng1999policy} for each task. The meta-learned prior conducts reward shaping on newly sampled tasks either directly (zero-shot) or adapting to the \textbf{task-posterior} optimum (few-shot) to shape rewards in the meantime of solving the task. We provide theoretical guarantee for the latter.
Extensive experiments demonstrate the effectiveness of our reward shaping in both two cases.


To summarize, our contributions are:
(1) We present a first attempt to conduct \emph{general, automatic} reward shaping with meta-learning on a distribution of tasks
for better credit assignment and learning efficiency; (2) Our framework requires \emph{only} a shared state space across tasks, and could be applied either directly or adaptively on newly sampled tasks, which is quite general and flexible compared with most existing meta-learning methods and multi-task reward shaping works; (3) We theoretically derive and analyze the optimal reward shaping (w.r.t. credit assignment based on potential functions \citep{ng1999policy}) and our shaping algorithm.

\section{Preliminaries}
\label{sec:pre}

We consider the setting of multi-task reinforcement learning (RL), where the tasks follow a distribution $p(\mathcal{T})$. Each sampled task $\mathcal{T}_i\sim p(\mathcal{T})$ is a standard
Markov Decision Process (MDP) 
$M_i = (\mathcal{S}, \mathcal{A}_i, T_i, \gamma, R_i)$, where $\mathcal{S}$ is the state space, assumed to be shared by all tasks, $\mathcal{A}_i$ is the action space, $T_i:\mathcal{S}\times \mathcal{A}_i \times \mathcal{S}\to [0, 1]$ is the state transition probability, $\gamma\in [0, 1]$ is the discount factor and $R_i:\mathcal{S}\times \mathcal{A}_i \times \mathcal{S}\to \mathbb{R}$ is the reward function. Here, we use the subscript $i$ to denote that the tasks may have \emph{different} action spaces $\mathcal{A}_i$, different transition probabilities $T_i$ and different reward functions $R_i$. 

In this section, we briefly introduce the techniques on which our method is based, namely general Q-learning variants to solve individual MDPs, reward shaping functions to accelerate learning with theoretical guarantees, and meta-learning to tackle reward shaping on task distributions.

\subsection{Q-Learning}

Given any MDP $M$, a policy is a distribution $\pi(a | s)$.
The \textbf{V-value} $V_{M}^{\pi}(s)$ and \textbf{Q-value} $Q_{M}^{\pi}(s, a)$ are correspondingly defined for $(M, \pi)$ as cumulative rewards.
The goal of standard RL on a single task is to find the optimal $\pi$ that gives maximal V-(and Q-)values: $V_{M}^{*}(s) =\sup_{\pi} V_{M}^{\pi}(s), Q_{M}^{*}(s, a) =\sup_{\pi} Q_{M}^{\pi}(s, a)$.


Q-Learning \citep{watkins1992q} provides one solution to directly learn $Q_M^*$ and induce $\pi$ from it.
Different from previously tabular representations,
Deep Q-Network (DQN) \citep{mnih2015human} parameterizes the Q-value with a neural network $Q_{\theta}$ and minimizes the temporal difference (TD) error \cite{sutton1998reinforcement} with gradient descent:
\begin{equation*}
    \min_{\theta} \|R(s, a, s^\prime) + \gamma \max_{a^\prime} Q_{\theta}(s^\prime, a^\prime) - Q_{\theta}(s, a)\|^2,
    \label{dqn-obj}
\end{equation*}
where $\theta$ represents the parameters of the neural network. A periodic target network is usually adopted.

Dueling-DQN~\cite{wang2016dueling} specifically parameterizes $Q_{\theta}$ as $Q_{\theta}(s, a) = V_{\theta}(s) + A_{\theta}(s, a)$ so as to `` generalize learning across actions'' for better learning efficiency and performance. The neural network's penultimate layer outputs a V-value head $V_\theta$ and an advantage head $A_{\theta}$ that sum to the ultimate Q-value.
Still, the delayed (or even sparse) nature of rewards poses great challenge on learning.

\subsection{Potential-based shaping function}
\label{policy-invariance}
A reward-shaping function $F:\mathcal{S}\times \mathcal{A}\times \mathcal{S}\to \mathbb{R}$ modifies the original reward function and attempts to make RL methods (e.g., Q-learning) converge faster with more ``instructive'' rewards.
It generally resides in the same functional space as the reward function $R$, and transforms the original MDP $M=(\mathcal{S}, \mathcal{A}, T, \gamma, R)$ into another \textbf{shaped} MDP $M^\prime=(\mathcal{S}, \mathcal{A}, T, \gamma, R^\prime=R+F)$.
Of all possible shapings, potential-based shaping functions \citep{ng1999policy} retain the optimal policy, as summarized below.
\theoremstyle{definition}
\begin{definition}[Potential-based shaping function \citep{ng1999policy}]
{\it $F:\mathcal{S}\times \mathcal{A}\times \mathcal{S}\to \mathbb{R}$ is a potential-based shaping function if there exists a real-valued function $\Phi:\mathcal{S}\to\mathbb{R}$, such that $\forall (s, a, s^\prime )\in \mathcal{S} \times \mathcal{A} \times \mathcal{S}$,
\begin{equation*}
    F(s, a, s^\prime) = \gamma \Phi(s^\prime) - \Phi(s).
\end{equation*}
$\Phi(s)$ is thus called the potential function.}
\end{definition}
%
\begin{theorem}[Policy Invariance under Reward Shaping \citep{ng1999policy}]
\label{thm:policy_invariance}
{\it The condition that $F$ is a potential-based shaping function is necessary and sufficient for it to guarantee consistency with the optimal policy. Formally, for $M=(\mathcal{S}, \mathcal{A}, T,\gamma, R)$ and $M^\prime = (\mathcal{S}, \mathcal{A}, T,\gamma, R + F)$, if $F(s, a, s^\prime) = \gamma \Phi(s^\prime) - \Phi(s)$, then $\forall (s,a)\in \mathcal{S}\times \mathcal{A}$
\begin{equation}
    \begin{split}
         Q^{*}_{M^\prime}(s, a) =& Q^{*}_{M}(s, a) - \Phi(s),\\
         V^{*}_{M^\prime}(s) =& V^{*}_{M}(s) - \Phi(s),\\
    \end{split}
    \label{relation}
\end{equation}
so the policy derived from $Q^*_{M^\prime}$ remains the same.}
\end{theorem}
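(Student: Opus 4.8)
The plan is to establish the displayed identities in~\eqref{relation} by positing an explicit form for the optimal Q-function of the shaped MDP $M^\prime$ and verifying that it is a fixed point of the corresponding Bellman optimality operator; uniqueness of that fixed point then forces equality. Concretely, I would define the candidate function $\hat{Q}(s,a) := Q^*_{M}(s,a) - \Phi(s)$ and show that it satisfies the Bellman optimality equation for $M^\prime = (\mathcal{S}, \mathcal{A}, T, \gamma, R+F)$.

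First I would write out the Bellman optimality operator of $M^\prime$ applied to $\hat{Q}$:
\begin{equation*}
    (\mathcal{B}_{M^\prime}\hat{Q})(s,a) = \mathbb{E}_{s^\prime\sim T(\cdot|s,a)}\Big[R^\prime(s,a,s^\prime) + \gamma \max_{a^\prime}\hat{Q}(s^\prime,a^\prime)\Big],
\end{equation*}
with $R^\prime = R + F$ and $F(s,a,s^\prime) = \gamma\Phi(s^\prime) - \Phi(s)$. The crucial observation is that $\Phi(s^\prime)$ does not depend on the maximizing action $a^\prime$, so it factors out of the inner maximum: $\max_{a^\prime}\hat{Q}(s^\prime,a^\prime) = \max_{a^\prime} Q^*_{M}(s^\prime,a^\prime) - \Phi(s^\prime) = V^*_{M}(s^\prime) - \Phi(s^\prime)$. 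Substituting the shaping term and this identity, the two $\gamma\Phi(s^\prime)$ contributions cancel, leaving $(\mathcal{B}_{M^\prime}\hat{Q})(s,a) = \mathbb{E}_{s^\prime}[R(s,a,s^\prime) + \gamma V^*_{M}(s^\prime)] - \Phi(s) = Q^*_{M}(s,a) - \Phi(s) = \hat{Q}(s,a)$, where the middle equality is just the Bellman optimality equation for the original MDP $M$.

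Having shown $\hat{Q}$ is a fixed point of $\mathcal{B}_{M^\prime}$, I would invoke uniqueness of the optimal Q-function — the Bellman optimality operator being a $\gamma$-contraction in sup-norm for $\gamma<1$ and hence admitting exactly one fixed point — to conclude $Q^*_{M^\prime} = \hat{Q}$, which is precisely the first line of~\eqref{relation}. The V-value identity follows immediately by maximizing over $a$ and once more pulling the action-independent $\Phi(s)$ out of the maximum: $V^*_{M^\prime}(s) = \max_a Q^*_{M^\prime}(s,a) = V^*_{M}(s) - \Phi(s)$. Policy invariance is then a corollary, since $\arg\max_a Q^*_{M^\prime}(s,a) = \arg\max_a [Q^*_{M}(s,a) - \Phi(s)] = \arg\max_a Q^*_{M}(s,a)$, the additive $\Phi(s)$ being irrelevant to the maximizer.

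The algebra itself is routine; the only genuinely delicate point is the cancellation, which hinges entirely on the potential term having the exact form $\gamma\Phi(s^\prime) - \Phi(s)$, so that the discounted $\gamma\Phi(s^\prime)$ absorbed into the successor value precisely annihilates the $\gamma\Phi(s^\prime)$ introduced by $F$. This also illuminates the necessity direction (asserted informally but not written out in~\eqref{relation}): were $F$ not expressible as $\gamma\Phi(s^\prime)-\Phi(s)$, no such telescoping would occur in general, and one could construct transition/reward instances whose induced greedy policy differs. I would therefore expect necessity to be the harder part, handled separately by an explicit counterexample construction over a suitably chosen MDP rather than by the fixed-point argument above.
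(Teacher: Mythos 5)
This theorem is never proved in the paper: it is imported as background from \citet{ng1999policy} in the Preliminaries, and the paper only ever uses the sufficiency direction (the displayed identities \eqref{relation}) downstream, in Thm.~3.1 and Thm.~3.2. So there is no in-paper proof to compare against; the natural benchmark is the original argument of Ng et al., and your proof matches it in substance. Defining $\hat{Q}(s,a) = Q^*_{M}(s,a) - \Phi(s)$, verifying it satisfies the Bellman optimality equation of $M^\prime$ via the cancellation of the $\gamma\Phi(s^\prime)$ terms, and then invoking uniqueness of the fixed point is exactly the classical route, and your algebra is correct; the V-value identity and the $\arg\max$ invariance follow as you say. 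Two caveats are worth recording. First, your uniqueness step rests on the Bellman operator being a $\gamma$-contraction, which requires $\gamma < 1$; the paper's setting allows $\gamma \in [0,1]$, and the undiscounted case needs the extra structure (absorbing terminal states, proper policies) that Ng et al. assume, so your proof as written is slightly less general than the cited statement. Second, you correctly identify that the ``necessary'' half of the theorem is a genuinely separate argument --- an explicit construction of transitions and rewards under which a non-potential-based $F$ changes the greedy policy --- and you defer it rather than prove it. That deferral is harmless for this paper, since everything built on the theorem uses only sufficiency, but a complete proof of the statement as written would have to include that construction.
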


Consequently, if we choose $\Phi(s) = V^{*}_{M}(s)$, then $V^{*}_{M^\prime}(s) \equiv 0$, and ``all that would remain to be done would be to learn the non-zero Q-values'' \citep{ng1999policy}. 

However, why are the ``non-zero Q-values'' easier to learn for RL?
Agents could never know \emph{a priori} which actions' Q-values are zero, and we cannot directly induce policies from V-values without access to the underlying MDP model.
We found that the true advantage this particular reward shaping brings about is under-appreciated in previous works, 
and in Sec. \ref{optimal_shaping} we provide formal analysis and identify its theoretical optimal efficiency in \emph{credit assignment}, motivating our framework based on such shaping functions.

\subsection{Meta-Learning}

Meta-learning is an effective strategy to deal with a distribution of tasks. 
Specifically, it operates on two sets of tasks: \textbf{meta-training} set $\{\mathcal{T}_i\}_{i=1}^N$ and \textbf{meta-testing} set $\{\mathcal{T}_j\}_{j=N+1}^{N+M}$, both drawn from the same task distribution $p(\mathcal{T})$. The meta-learner attempts to learn the structure of tasks during meta-training, and in meta-testing, it leverages the structure to learn efficiently on new tasks with a limited number of newly observed examples from new tasks.


Meta-learning methods have been developed in both supervised learning \citep{santoro2016meta,vinyals2016matching} and RL settings \cite{duan2016rl,wang2016learning}. One of the most popular algorithms 
is 
Model-Agnostic Meta-Learning (MAML) \citep{finn2017model}, which meta-learns an versatile initialization $\theta$ of model parameters by:
\begin{align}
    \phi_i \leftarrow & \theta - \alpha \nabla_{\theta}\mathcal{L}_{\mathcal{T}_i}(f_{\theta}), \label{eqn:inner} \\
    \theta \leftarrow & \theta - \beta \nabla_{\theta} \mathbb{E}_{\mathcal{T}_i} \mathcal{L}_{\mathcal{T}_i} (f_{\phi_i}).
    \label{eqn:outer}
\end{align}
where $\theta$ are the parameters to be learned, $\phi_i$ are the task-specific parameters updated from $\theta$ as initialization (Eqn. \eqref{eqn:inner}),
$\alpha$ and $\beta$ are learning rates and $\mathcal{L}_{\mathcal{T}_i}$ is the loss function on each $\mathcal{T}_i$. Note that $\phi_i$ depend on $\theta$ and the gradients back-propagated through $\phi_i$ to $\theta$ (Eqn. \eqref{eqn:outer}).
In meta-testing, given data from the new task $\mathcal{T}_j$,
MAML adapts model parameters starting from $\theta$.
MAML has also been recently extended to a more Bayesian treatment \cite{grant2018recasting,yoon2018bayesian,NIPS2018_8161}.

\section{Methods}
\label{sec:method}
Based on the notions and notations in Sec.~\ref{sec:pre}, we first formulate the problem of learning shaping functions on a distribution of tasks. Then we derive the optimal shaping function we'd like to learn and introduce our algorithm to learn the shaping function on sampled tasks from the distribution. Lastly we introduce how to use the learned shaping function on newly sampled tasks.

\subsection{Problem Formulation}

Our goal is to learn a potential function $\Phi(s):\mathcal{S} \to \mathbb{R}$ capable of effective reward shaping on tasks sampled from the distribution to accelerate their learning. 
We seek to learn $\Phi(s)$ via mete-learning on a certain number of sampled tasks. In terms of meta-learning, this is the \emph{meta-training} phase to extract \emph{prior} knowledge from the task distribution. In light of this and recent works \cite{grant2018recasting,yoon2018bayesian,NIPS2018_8161}, we call $\Phi(s)$ the potential function \textbf{prior}.
During \emph{meta-testing} phase, we seek to directly plug in the prior to shape rewards as a general test, or to adapt it to the \textbf{task-posterior} $\Phi_i(s|\mathcal{T}_i)$ under more restricted conditions for more effective shaping.

Note that in implementation we instantiate the prior as $\Phi(s; \theta)$ and task-posterior as $\Phi_i(s|\mathcal{T}_i; \phi_i)$, i.e., ordinary neural networks rather than distributions. However, our method could still be understood from a Bayesian perspective by treating the prior as a delta function, the task-posterior as maximum-a-posteriori inference and the overall algorithm as empirical Bayes, the details of which are beyond the scope of this paper and readers may refer to \cite{grant2018recasting,yoon2018bayesian,NIPS2018_8161}.

Next, we first  derive the ideal task-posterior $\Phi(s|\mathcal{T}_i)$.

\subsection{Efficient Credit Assignment with Optimal Potential Functions}
\label{optimal_shaping}
Delving deeper into the particular potential function $\Phi(s) = V^{*}_{M}(s)$ in Sec. \ref{policy-invariance}, we first show that the substantial advantage it brings to credit assignment, which the ``non-zero Q-values'' fail to identify,
is the following:
\begin{theorem}
\label{thm:shape-opt}
{\it Shaping with $\Phi(s) = V^{*}_{M}(s)$ is optimal for credit assignment and learning efficiency.}
\end{theorem}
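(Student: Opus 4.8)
My plan is to first turn the qualitative claim into a precise criterion: \emph{perfect} credit assignment should mean that the expected one-step shaped reward of every action already equals that action's true long-term quality, so that an agent never has to propagate delayed reward backwards in time. Concretely, I would quantify credit-assignment quality by how closely $\mathbb{E}_{s'}[R'(s,a,s')]$ matches the optimal advantage $A^*_M(s,a) := Q^*_M(s,a) - V^*_M(s)$, and quantify learning efficiency by the number of Bellman backups over which reward must be propagated before the optimal values are pinned down. The theorem then splits into two assertions: that $\Phi(s)=V^*_M(s)$ attains the ideal on both measures, and that no other potential-based shaping does better.

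The core computation is short. With $\Phi=V^*_M$ the shaped reward is $R'(s,a,s')=R(s,a,s')+\gamma V^*_M(s')-V^*_M(s)$, so taking the expectation over $s'\sim T(s,a,\cdot)$ and invoking the Bellman optimality identity $Q^*_M(s,a)=\mathbb{E}_{s'}[R(s,a,s')+\gamma V^*_M(s')]$ gives
\[
\mathbb{E}_{s'}\!\left[R'(s,a,s')\right]=Q^*_M(s,a)-V^*_M(s)=A^*_M(s,a).
\]
Thus the immediate shaped reward equals the advantage: it is exactly $0$ for every optimal action and strictly negative for every suboptimal one. This is precisely the situation described in the introduction in which ``the agent could know exactly which actions are right or wrong,'' so the shaped problem reduces to regressing a one-step signal, i.e. no more difficult than supervised learning.

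For the efficiency and optimality halves I would lean on Theorem~\ref{thm:policy_invariance}, which yields $V^*_{M'}(s)=V^*_M(s)-\Phi(s)$, identically $0$ when $\Phi=V^*_M$, together with $Q^*_{M'}(s,a)=A^*_M(s,a)$. Substituting $V^*_{M'}\equiv 0$ into the shaped Bellman optimality equation collapses the bootstrap term, $\max_{a'}Q^*_{M'}(s',a')=V^*_{M'}(s')=0$, so that $Q^*_{M'}(s,a)=\mathbb{E}_{s'}[R'(s,a,s')]$: the optimal value is a purely one-step quantity requiring \emph{zero} backups of temporal propagation, the minimum possible. For uniqueness, I would observe that since $V^*_{M'}(s)=V^*_M(s)-\Phi(s)$, the value landscape is flattened to a constant --- thereby removing all dependence of a state's optimal value on delayed future reward --- if and only if $\Phi=V^*_M$ up to an additive constant; hence among all policy-preserving (potential-based) shapings this choice uniquely eliminates the credit-propagation burden, establishing optimality.

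The main obstacle I anticipate is the second, qualitative half: ``learning efficiency'' is not a single scalar to be optimized, so the argument hinges entirely on committing to a defensible formalization of it. The clean route is the effective-horizon / bootstrap-depth argument above, where a flat optimal value function provably minimizes the number of Bellman backups over which reward must travel; the delicate point is arguing that this notion is the operative one and that no competing shaping improves it, rather than proving a full sample-complexity bound for a specific learner, which would require extra assumptions and is where a fully rigorous treatment would become heavy.
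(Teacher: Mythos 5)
Your proposal is correct, and its computational core coincides exactly with the paper's: both proofs hinge on the identity $\mathbb{E}_{s^\prime}[R^\prime(s,a,s^\prime)] = Q^*_M(s,a) - V^*_M(s) = Q^*_M(s,a)-\max_a Q^*_M(s,a) \leq 0$, with equality precisely at optimal actions, so that any suboptimal action announces itself through a strictly negative immediate reward. Where you genuinely diverge is in how the ``optimal efficiency'' half is handled. The paper stops at the qualitative reading: since a deviation from the optimal policy is penalized the instant it occurs, the optimality of an action can be judged without consulting future rewards, and this is declared the best possible credit assignment --- no formal criterion of efficiency is committed to, and no comparison against other shapings is made. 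You instead commit to a formalization (bootstrap depth, i.e.\ the number of Bellman backups over which reward must propagate), invoke Theorem~\ref{thm:policy_invariance} to get $V^*_{M^\prime} \equiv 0$ so that the shaped Bellman equation collapses to a one-step quantity, and then add a uniqueness argument: the shaped optimal value landscape is flat --- hence the propagation burden vanishes --- if and only if $\Phi = V^*_M$ up to an additive constant, so no other potential-based (equivalently, by Theorem~\ref{thm:policy_invariance}, policy-preserving) shaping does better. This buys a genuinely stronger and more defensible statement than the paper's, at the cost you yourself identify: the argument is only as strong as the chosen metric, a caveat that applies equally to the paper's informal version but which the paper leaves implicit.
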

\begin{proof}
We first show that the reward shaping gives non-positive immediate rewards with the optimal actions' rewards exclusively zero.
To see this, consider a general MDP $M$ and the corresponding \textbf{shaped} MDP $M^\prime$, we have
\begin{align*}
    R^\prime(s, a) = & \mathbb{E}_{s^\prime}R^\prime(s, a, s^\prime)\\
    = & \mathbb{E}_{s^\prime}[R(s, a, s^\prime) + \gamma V^*_M(s^\prime) - V^*_M(s)]\\
    = & Q^*_M(s,a) - \max_a Q^*_M(s, a)\\
    \leq & 0,
\end{align*}
where the last equality holds iff $a = \arg\max_a Q_M^*(s, a)$. 
\begin{algorithm}[t]
\caption{Meta-learning potential function prior}
\begin{algorithmic}
    \label{rsml}
    \STATE \textbf{Input:} $p(\mathcal{T})$: a distribution over tasks
    \STATE \textbf{Input:} $\alpha$, $\beta$: step sizes
    \STATE \textbf{Output:} Learned prior $\theta$
    \STATE Randomly initialize parameter $\theta$ for prior
    \FOR{$\texttt{meta\_iteration} = 0, 1, 2...$}  
    \STATE Sample a batch of tasks $\mathcal{T}_i \sim p(\mathcal{T})$ 
        \FORALL{$\mathcal{T}_i$}
        \STATE Initialize replay buffer $\mathcal{D}_i$
        \STATE Collect experience $\{s_0, a_0, r_0, \cdots\}$ with $\epsilon$-greedy using $Q_{\theta}(s, a)$ and add to the replay buffer $\mathcal{D}_i$
        \STATE Evaluate $\nabla_{\theta}\mathcal{L}_{\mathcal{T}_i}(Q_{\theta})$ using samples from $\mathcal{D}_i$ ($\mathcal{L}_{\mathcal{T}_i}$ defined in Eqn. \eqref{task_objective})
        \STATE Compute adapted parameters with gradient descent: $\phi_i = \theta - \alpha \nabla_{\theta}\mathcal{L}_{\mathcal{T}_i}(Q_{\theta})$ 
        \ENDFOR
    \STATE Update $\theta \leftarrow \theta -\beta \nabla_{\theta}\mathbb{E}_{\mathcal{T}_i} \|Q_{\theta}(s, a)-Q_{\phi_i}(s, a)\|^2$ with previous samples from all $\mathcal{D}_i$
    \ENDFOR
\end{algorithmic}
\end{algorithm}

Therefore, after shaping the rewards with $\Phi(s) = V^{*}_{M}(s)$, at \emph{any} state, only the optimal action(s) give zero immediate reward, and all the other actions give strictly negative rewards right away. As a result, \emph{credit assignment} could be achieved the most efficiently since the agent could spot a deviation from the optimal policy as soon as it receives a negative reward. The optimality of any action could be determined instantaneously after it's taken without any need to consider future rewards, and any RL algorithm could penalize negative-reward actions without any fear that they might lead to better rewards in the future, 
hence the theoretically optimal efficiency in credit assignment.
\end{proof}

We thus choose $V_{M_i}^*(s)$ as the adaptation target of task-posterior  $\Phi_i(s | \mathcal{T}_i)$.
In practical RL, the non-positivity may not always hold with sampled experience and rewards from the environment, but the property still holds under expectation, and mini-batches of data approximate the very expectation.
Learning efficiency could therefore be still improved, which will be demonstrated through our experiments.


\subsection{Meta-Learning Potential Function Prior}
\label{sec:meta-learn}
The optimal shaping function $V_{M_i}^{*}(s)$
is task-specific without a universal optimum for all tasks $\mathcal{T}$. 
Inspired by MAML's idea to learn a proper prior capable of fast adaptation to the task-posterior, we propose Alg.~\ref{rsml}, as detailed below.

Formally, we specify the prior as $\Phi(s; \theta)$ defined on $\mathcal{S}$ with parameters $\theta$. For each task $\mathcal{T}_i$, the task-posterior $\Phi_i(s|\mathcal{T}_i; \phi_i)$ adapts in the direction of $V_{M_i}^{*}(s)$ initialized from $\theta$. Then, a natural objective of learning prior is:
\begin{equation}
    \label{eqn:prior-the}
    \min_{\theta}\mathbb{E}_{\mathcal{T}_i}\|\Phi(s;\theta) - V_{M_i}^{*}(s)\|^2. 
\end{equation}
%
However, $V_{M_i}^{*}(s)$ is not directly accessible for any MDP, and neither is there any RL algorithm to directly learn optimal V-values. We therefore first specify the adaptation from prior to $V_{M_i}^{*}(s)$, and then return to the learning of prior.

\begin{algorithm}[t]
\caption{Meta-testing (adaptation with advantage head)}
\begin{algorithmic}
    \label{meta-test}
    \STATE \textbf{Input:} $\mathcal{T}_j$: new task to solve
    \STATE \textbf{Input:} $\phi_j$: task-posterior parameters, initialized as learned prior $\theta$
    \STATE \textbf{Output:} adapted task-posterior $\phi_j$
    \STATE Initialize replay buffer $\mathcal{D}$
    \FOR{$\texttt{gradient\_step} = 0, 1, 2...$}
    \STATE Collect experience $\{s_0, a_0, r_0, \cdots\}$ with $\epsilon$-greedy using $A_{\phi_j}(s, a)$ and add to replay buffer $\mathcal{D}$
    \STATE Update $A_{\phi_j}(s, a)$ with Eqn. \eqref{first_step} with samples from $\mathcal{D}$ and current potential function $V_{\phi_j}(s)$ for shaping
    \STATE Update $V_{\phi_j}(s)$ with Eqn. \eqref{second_step} with samples from $\mathcal{D}$
    \ENDFOR
\end{algorithmic}
\end{algorithm}

\textbf{Task-Posterior Adaptation:}
Existing policy-based RL methods either don't estimate values or simply use the value output as baseline or bootstrap, leaving value-based RL methods more suitable for our framework.
In this paper, we simply choose Q-learning \citep{watkins1992q}, though in principle any value-based algorithm explicitly estimating optimal values could be adopted.

Q-learning still cannot directly estimate optimal V-values. To address this, we decompose the optimal values as:
\begin{equation*}
    Q_{M_i}^{*}(s, a) = V_{M_i}^{*}(s) + A_{M_i}^{*}(s, a), 
\end{equation*}
where $A_{M_i}^{*}(s, a)$ is the advantage-value function. 
%
We implement this by separating the V-value head and advantage head before the network outputs Q-value:
\begin{equation*}
    Q_{\phi_i}(s, a) = V_{\phi_i}(s) + A_{\phi_i}(s, a),
    \label{decomposition}
\end{equation*}
where $\phi_i$ is initialized as $\theta$. 

Note that $V_{\phi_i}(s)$ (and $V_{\theta}(s)$) are the potential functions we need, so $\Phi_i(s|\mathcal{T}_i; \phi_i)$ (and $\Phi(s; \theta)$) are just part of the whole network, but for completeness we denote the overall parameters $\phi_i$ (and $\theta$) and treat $\Phi_i(s|\mathcal{T}_i; \phi_i)$ (and $\Phi(s; \theta)$) as ``augmented'' potential functions.

Task-posterior adapts by following Q-learning and minimizing the TD error:
\begin{equation}
    \mathcal{L}_{\mathcal{T}_i}(Q_{\phi_i}) = \|R_i(s, a, s^\prime) + \gamma \max_{a^\prime} Q_{\phi_i}(s^\prime, a^\prime) - Q_{\phi_i}(s, a)\|^2.
    \label{task_objective}
\end{equation}
This method was first introduced in dueling-DQN \citep{wang2016dueling} but for a different purpose of speeding up training. Here we exploit the architecture in estimating the optimal V-values. To see this, first note that for identifiability of $V$ and $A$, the maximum of the output advantage function is further subtracted from $Q$ in implementation:
\begin{equation}
\label{eqn:ddqn}
    Q_{\phi_i}(s, a) = V_{\phi_i}(s) + A_{\phi_i}(s, a) - \max_{a^\prime}A_{\phi_i}(s, a^\prime). 
\end{equation}
%
As $Q_{\phi_i}$ attains $Q^*_{M_i}$ during Q-learning, by taking $\max_a$ on both sides of Eqn. \ref{eqn:ddqn}, we get
$V_{\phi_i}(s) = \max_a Q^*_{M_i}(s, a) = V^*(s)$. We can therefore learn the optimal V-values with dueling-DQN, adapting to task-posterior from prior.

\textbf{Prior Learning:}
Following the design of the task-posterior, the prior is naturally instantiated also as a dueling-DQN $Q_{\theta}(s, a) = V_{\theta}(s) + A_{\theta}(s, a)$. Similar as MAML \cite{finn2017model},
we explicitly model the desired property of the prior to be able to efficiently adapt to the task-posterior. 
Based on that each task-posterior adapts from $\theta$ to $\phi_i$ on $\mathcal{T}_i$ with $N$ steps of gradient update, we could finally rewrite the impractical prior-learning problem (\ref{eqn:prior-the}) as a practical one:
\begin{equation}
\label{eqn:prior-learning}
    \min_{\theta} \mathbb{E}_{\mathcal{T}_i}\|Q_{\theta}(s, a)-Q_{\phi_i}(s, a)\|^2. 
\end{equation}
It is worth noting that this problem is in essence different from that of DQN, as it does \emph{not} compute bootstrapped Q-values for TD error but directly uses $Q_{\phi_i}$ under the expectation of $\mathcal{T}_i$ as the learning target for $Q_{\theta}$.

Also note that in implementation we keep the full computational graph of task-posterior adaptation so $\phi_i$ is dependent on $\theta$ and gradients could back-propagate through $\phi_i$ to $\theta$. 
For all our experiments we set $N=1$ for simplicity, but $N>1$ is a natural implementational extension.
Possibly thanks to task multiplicity, we didn't find target networks necessary for the Q-networks.
Besides, since it's still an overall off-policy algorithm, we don't need to re-sample data for $\theta$ update, contrary to MAML.

\subsection{Meta-Testing with Potential Function Prior}
\label{sec:meta-test}
During meta-testing, we aim to find the optimal policy on newly sampled tasks $\mathcal{T}_j$ with reward shaping by the learned potential function prior. 
We use the meta-learned $V_{\theta}(s)$ 
to directly shape the MDP, which transforms the original MDP $M_j=(\mathcal{S}, \mathcal{A}_j, T_j, \gamma, R_j)$ into the shaped MDP $M_j^\prime = (\mathcal{S}, \mathcal{A}_j, T_j, \gamma, R_j^\prime := R_j + F)$, where $F(s, a, s^\prime) = \gamma V_{\theta}(s^\prime) - V_{\theta}(s)$. Intuitively, $V_{\theta}(s)$ provides a good estimate of $V_{M_{j}}^{*}(s)$ from meta-training on the task distribution, thus learning on $M_j^{\prime}$ can be much simpler than learning on $M_j$ as the reward shaping is close to optimal.

We identify two cases of meta-testing with our dueling-DQN-based meta-learning algorithm. 
\textbf{Shaping only} is one case where $V_{\theta}(s)$ is directly applied on new tasks without adaptation.
This applies to new tasks with different action spaces, or when the advantage head simply could not be used for some reason (e.g., constraints on the new policy).
According to Thm. \ref{thm:policy_invariance}, any RL algorithm could be used on the shaped MDP with the optimal policy unchanged.  
\textbf{Adaptation with advantage head} is the other case where the action space doesn't change and the DQN-policy is still applicable. We can then jointly adapt $V_{M_j}^{*}(s)$ to the task-posterior and find the optimal policy efficiently within a few updates, initializing the whole $\phi_j$ as $\theta$.

In the latter case, we still shape the MDP with the task-posterior being adapted. We iteratively collect experience using $A_{\phi_j}(s, a)$ with $\epsilon$-greedy and update $A_{\phi_j}(s, a)$ and $V_{\phi_j}(s)$ alternating the following two steps (step size $\alpha$):

$\circ$ Update $A_{\phi_j}(s, a)$ with sampled data from replay buffer:
\begin{align}
    \phi_j \leftarrow \phi_j - \alpha \nabla_{\phi_j}\|R^{\prime}_j(s, a, s^\prime) & + \gamma \max_{a^\prime} A_{\phi_j}(s^\prime, a^\prime) \notag\\
    & - A_{\phi_j}(s, a)\|^2.
    \label{first_step}
\end{align}
$\circ$ Update $V_{\phi_j}(s)$ with sampled data from replay buffer:
\begin{align}
    \phi_j \leftarrow \phi_j - \alpha \nabla_{\phi_j}\|& V_{\phi_j}(s) - \texttt{stop\_gradient}\big( \notag\\
    &\max_a A_{\phi_j}(s, a) +  V_{\phi_j}(s)\big)\|^2.
    \label{second_step}
\end{align}
\begin{theorem}
\label{thm:adapt}
{\it Eqn. \eqref{first_step} optimizes for the optimal policy. Eqn. \eqref{second_step} optimizes for the task-posterior $V_{M_j}^{*}(s)$.}
\end{theorem}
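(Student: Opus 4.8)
The plan is to recognize that both update rules really act on the \emph{shaped} MDP $M_j^\prime$ and to reduce the entire statement to the policy-invariance identity of Thm.~\ref{thm:policy_invariance}. First I would make the shaping interpretation explicit: with $R_j^\prime(s,a,s^\prime)=R_j(s,a,s^\prime)+\gamma V_{\phi_j}(s^\prime)-V_{\phi_j}(s)$, the potential $\Phi=V_{\phi_j}$ is a valid \emph{state-only} potential function at every instant, so by Thm.~\ref{thm:policy_invariance} the shaped task $M_j^\prime$ shares the optimal policy of $M_j$ regardless of the current value of $V_{\phi_j}$. Under this reading, Eqn.~\eqref{first_step} is exactly the Q-learning TD update on $M_j^\prime$, with $A_{\phi_j}$ playing the role of the Q-function.

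For the first claim I would argue that at the fixed point of Eqn.~\eqref{first_step} the TD error vanishes in expectation, so $A_{\phi_j}$ satisfies the Bellman optimality equation of $M_j^\prime$, i.e. $A_{\phi_j}=Q^{*}_{M_j^\prime}$. Invoking the Q-relation $Q^{*}_{M_j^\prime}(s,a)=Q^{*}_{M_j}(s,a)-V_{\phi_j}(s)$ from Thm.~\ref{thm:policy_invariance} and noting that the subtracted $V_{\phi_j}(s)$ is action-independent, $\arg\max_a A_{\phi_j}(s,a)=\arg\max_a Q^{*}_{M_j}(s,a)$; hence the induced greedy policy is optimal for the original task, which is precisely the assertion that Eqn.~\eqref{first_step} optimizes for the optimal policy.

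For the second claim I would substitute the converged $A_{\phi_j}$ into the regression target of Eqn.~\eqref{second_step}. Taking $\max_a$ of the relation above and using the V-part of Thm.~\ref{thm:policy_invariance}, $\max_a A_{\phi_j}(s,a)=V^{*}_{M_j^\prime}(s)=V^{*}_{M_j}(s)-V_{\phi_j}(s)$, so the \texttt{stop\_gradient} target $\max_a A_{\phi_j}(s,a)+V_{\phi_j}(s)$ telescopes exactly to $V^{*}_{M_j}(s)$. Because the stop-gradient freezes the target, Eqn.~\eqref{second_step} is then an ordinary squared-error regression of $V_{\phi_j}(s)$ onto the constant $V^{*}_{M_j}(s)$, whose minimizer is $V_{\phi_j}=V^{*}_{M_j}$, the task-posterior.

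The main obstacle is the coupling of the two steps: $V_{\phi_j}$ enters the shaped reward driving Eqn.~\eqref{first_step}, while $A_{\phi_j}$ defines the target driving Eqn.~\eqref{second_step}, and both are emitted by the shared parameters $\phi_j$ of the dueling network. The clean way to close this is a joint fixed-point (or two-timescale) argument: treat Eqn.~\eqref{first_step} as the fast update so that $A_{\phi_j}$ tracks $Q^{*}_{M_j^\prime}$ for the quasi-static $V_{\phi_j}$, and observe that the equilibrium of Eqn.~\eqref{second_step} forces $\max_a A_{\phi_j}(s,a)=0$, which combined with the Q-relation pins $V_{\phi_j}=V^{*}_{M_j}$ and $A_{\phi_j}=A^{*}_{M_j}$ simultaneously. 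I would also flag why the \texttt{stop\_gradient} is essential, since without it the target would itself drift with $V_{\phi_j}$ and the telescoping to $V^{*}_{M_j}$ would collapse, and I would note, as the surrounding text already caveats, that all equalities hold in expectation over the sampled transitions.
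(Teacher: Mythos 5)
Your proposal is correct and follows essentially the same route as the paper's proof: Part~I via the observation that Eqn.~\eqref{first_step} is Q-learning on the shaped MDP $M_j^\prime$ (the paper states this explicitly as a remark, its main text arriving at the same conclusion by expanding $R_j^\prime$ and using the dueling identity $Q_{\phi_j}=A_{\phi_j}+V_{\phi_j}$ to recover the TD error on $M_j$), and Part~II by assuming convergence of the first step so that $A_{\phi_j}=Q^*_{M_j^\prime}$, applying Thm.~\ref{thm:policy_invariance}, and showing the \texttt{stop\_gradient} target $\max_a A_{\phi_j}(s,a)+V_{\phi_j}(s)$ equals $V^*_{M_j}(s)$. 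Your explicit two-timescale handling of the coupling and the equilibrium condition $\max_a A_{\phi_j}(s,a)=0$ is a slightly more careful articulation of what the paper relegates to its closing remarks, but it is the same argument in substance.
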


We defer the proof to Appx. \ref{sec:proof}. 

For faster adaptation on new tasks, we simply optimize Eqn. \eqref{first_step} and Eqn. \eqref{second_step} alternately, which we find sufficient in experiments.
We summarize such \textbf{adaptation with advantage head} in Alg.~\ref{meta-test}.

\textbf{Advantage over MAML:}
Note that in the latter case of meta-testing, 
one can directly adapt as the original MAML. However,
direct adaptation merely exploits the parameter initialization, while our Alg.~\ref{meta-test} also explicitly exploits the efficient reward shaping of the potential function prior in addition. The shaped rewards are easier for policy learning, and the adapting shaping (Eqn.~\eqref{second_step}) further boosts policy learning (Eqn.~\eqref{first_step}) immediately in the next loop.
Thus our Alg.~\ref{meta-test} is faster and more stable than direct MAML, and in Sec.~\ref{sec:exp} we compare with and outperform MAML.
We also emphasize that we only assume shared state space, facilitating adaptation across discrete and continuous action spaces, which MAML cannot achieve.



\section{Related Work}

To the best of our knowledge, 
the only recent work on \emph{automatic} reward shaping on a task distribution is \citet{jaderberg-ctf-2018}.
In addition to being independent of our work, the difference of \citet{jaderberg-ctf-2018} is that they access the limited novel states (termed ``game events'') of the game engine of their specific task and only need to evolve the rewards for those states. Such rewards are simply stored in a short, fixed-length table and optimized with evolution strategies, with the meta-optimization objective of evolution being also designed task-specifically.
Earlier similar works \cite{konidaris2006autonomous,snel2010multi} are also restricted in various ways such as relying on specific feature choice and evolution heuristics, being unable to adapt to new tasks as ours, lacking theoretical analysis of reward shaping on credit assignment or being unable to scale to complex environments with simple models.
In contrast to those works, our method is quite general, assuming no task knowledge or model access, with a more general, principled meta-learning objective, flexible application settings, novel theoretical analysis and gradient-based optimization.

Apart from \citet{jaderberg-ctf-2018}, almost all other recent RL successes in complex environments either \emph{manually} design reward shaping based on game elements, with examples in Doom \cite{wu2016training} and Dota 2 \cite{openaifive}, or simply depart from the scalar-reward RL approach and exploit rich supervision signals of other source with supervised learning \cite{dosovitskiy2016learning, silver2017mastering, Huang_2019_AAAI, wu2018hierarchical}.

\section{Experiments}
\label{sec:exp}
We demonstrate the effectiveness and generality of our framework under various settings. 
First we conduct experiments on the classic control task, CartPole \citep{barto1983neuronlike}, where the task distribution is defined varying the pole length and the action space could be either discrete or continuous. We then consider grid games whose state space is of much higher dimensionality and the maps of which hold exponential many possibilities (the task distribution is also defined on all the possible maps). 
Depending on whether the action space shares across the task distribution, the advantage head in our dueling-DQN model (and thus the Q-values) may not be applicable to newly sampled tasks. We therefore experiment under both settings to test the learning efficiency on new tasks.
Since we are more interested in general complex MDPs where shaping rewards are hard to code and our meta-training relies on function approximators to generalize on the task distribution, 
we use neural-network agents in all experiments under the model-free setting.

\subsection{Discrete and Continuous CartPoles}
\label{sec:exp-pole}

\begin{figure*}
    \parbox{0.33\textwidth}{
    \includegraphics[width=0.33\textwidth]{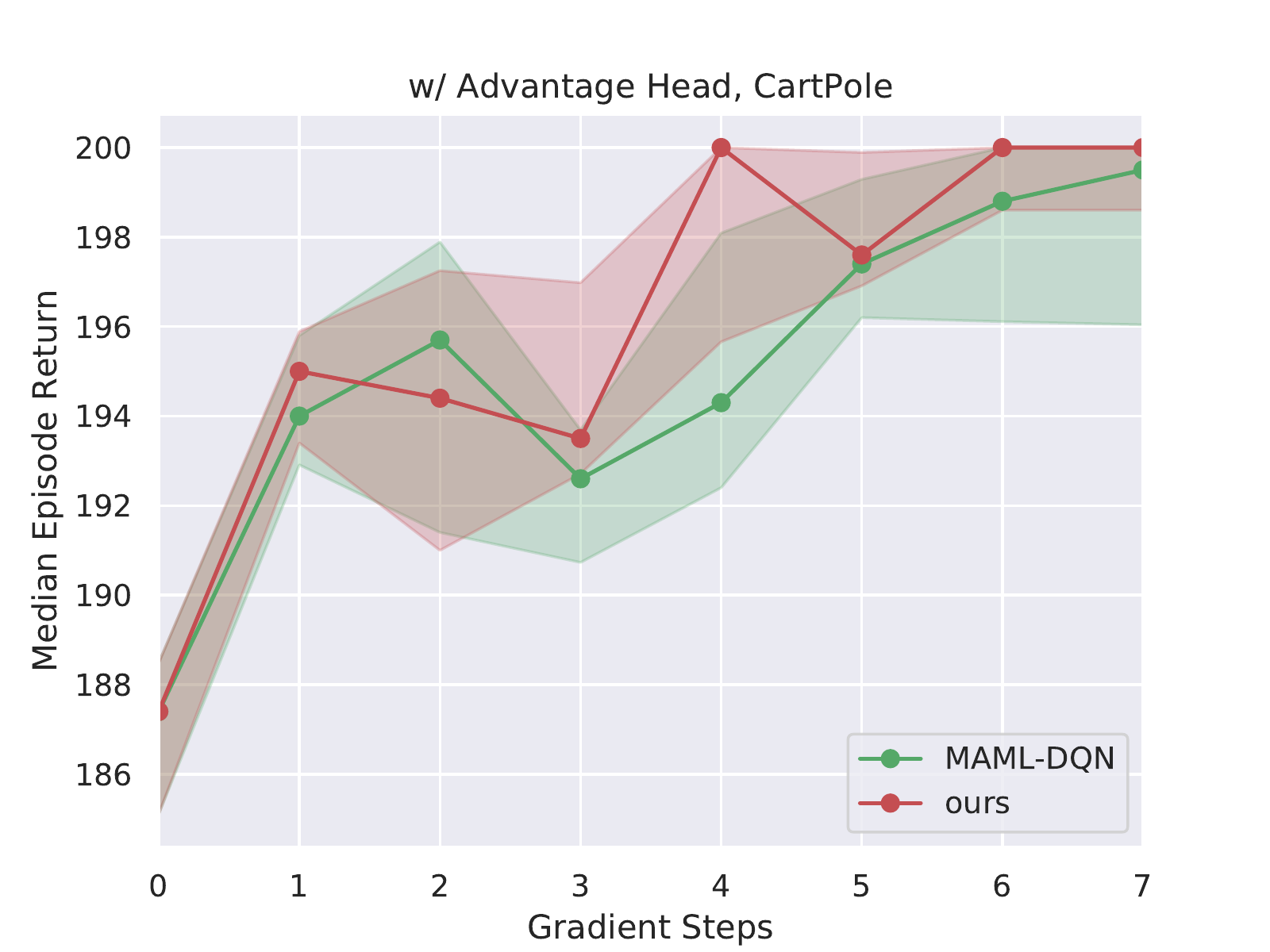}
    }
    \parbox{0.33\textwidth}{
    \includegraphics[width=0.33\textwidth]{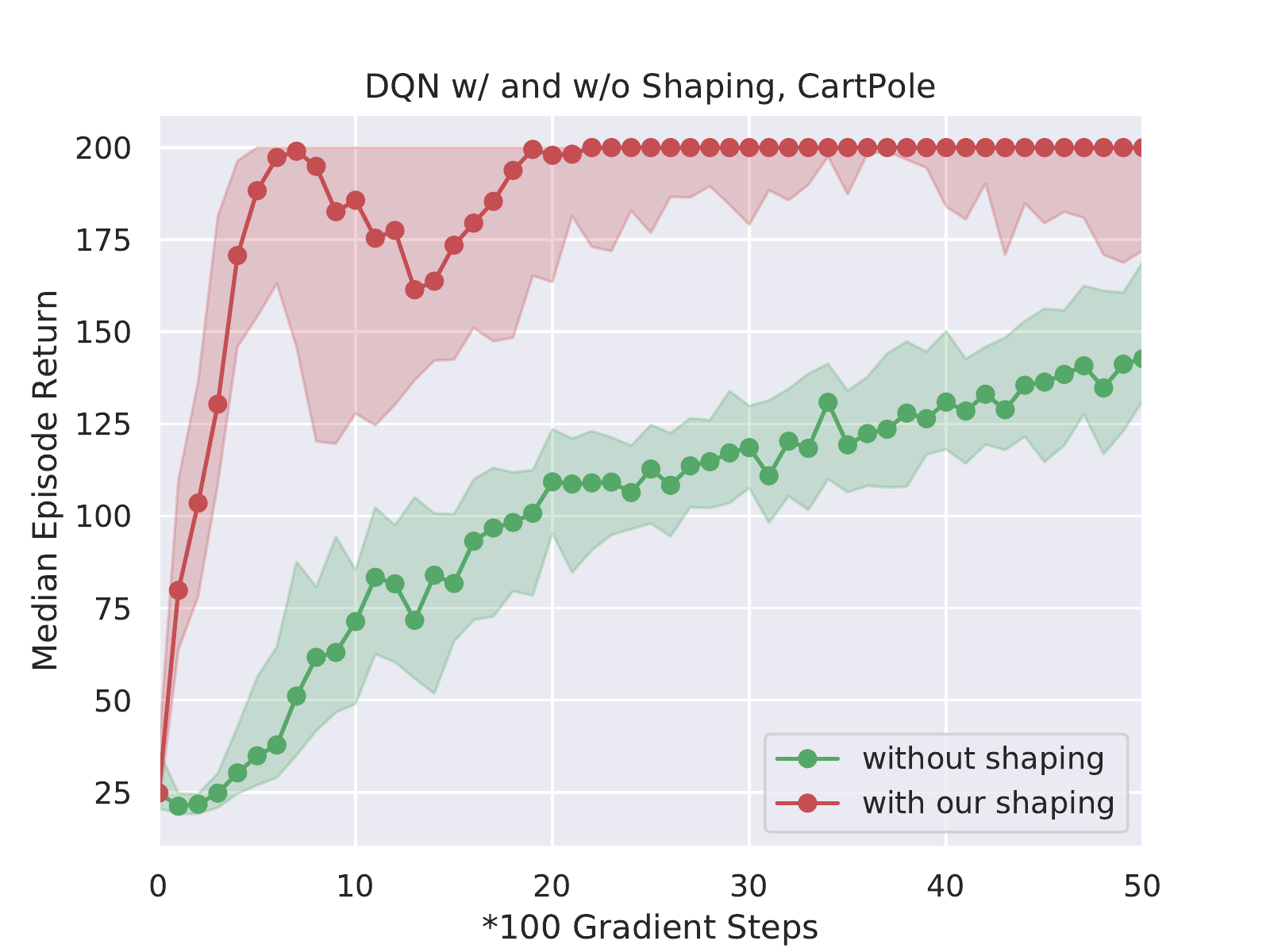}
    }
    \parbox{0.33\textwidth}{
    \includegraphics[width=0.33\textwidth]{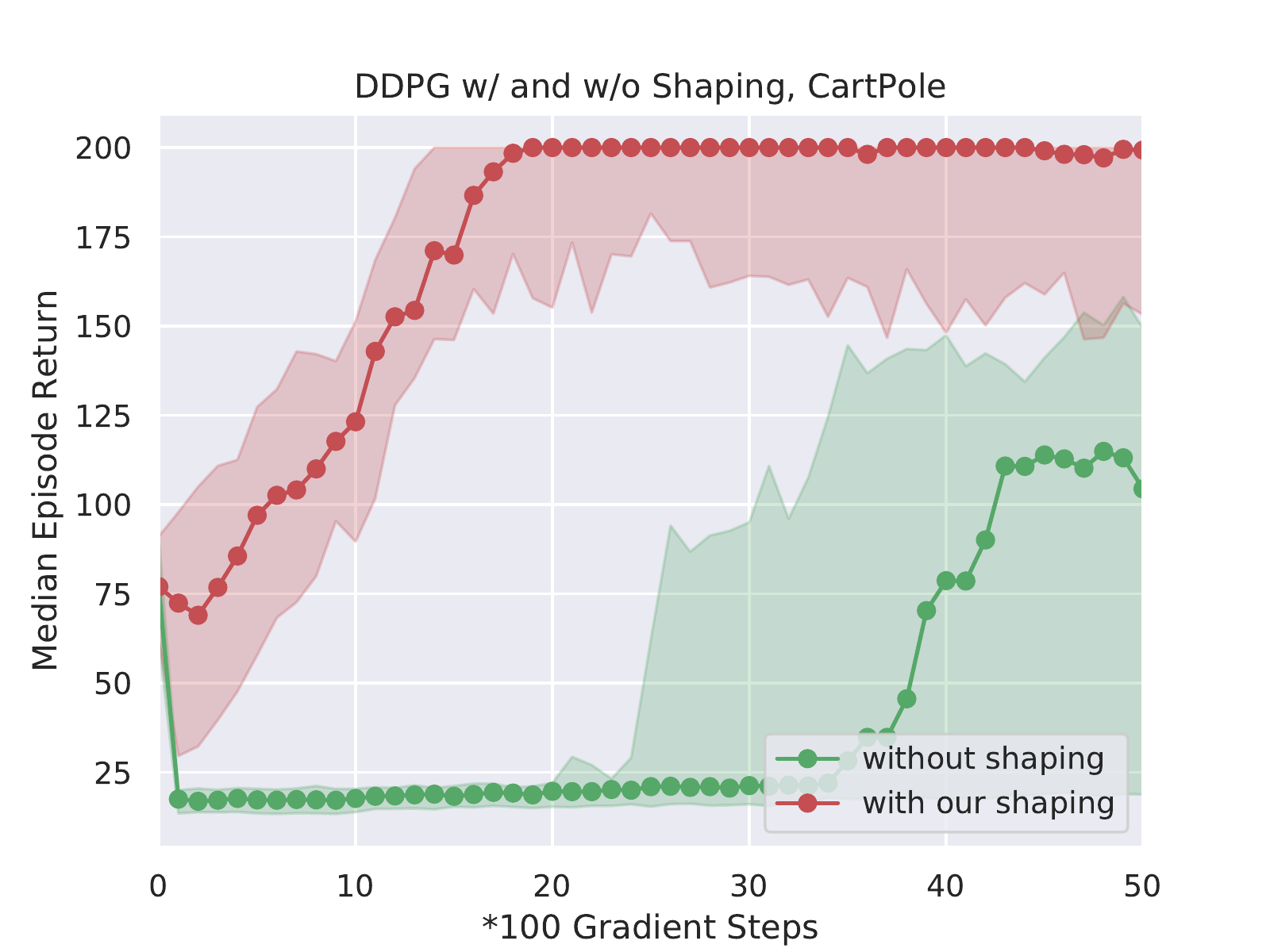}
    }
    \caption{
    Median and quartile return curves of meta-testing on newly sampled tasks on CartPoles. We uniformly outperform baselines in learning efficiency and stability.
    Left: meta-testing with advantage head. Improvement over directly applying MAML to the learned prior (green) is even clearer on later grid games.
    Middle: learning randomly initialized vanilla DQNs with (red) and without (green) our meta-learned zero-shot reward shaping.
    Right: learning randomly initialized continuous policies using DDPG with (red) and without (green) our meta-learned zero-shot reward shaping in continuous action space.
    }
    \vspace{-2ex}
    \label{fig:pole}
\end{figure*}

In CartPole \citep{barto1983neuronlike}, the agent tries to keep a pole upright by applying horizontal forces to the cart supporting the pole. Although a single particular CartPole is not very difficult, it's non-trivial to consider infinitely many CartPole tasks with different pole lengths, since the pole length affects the pole mass, mass center and, therefore, the whole dynamics of the environment. Besides, the applied forces could also be represented in either a discrete or continuous way in different tasks, posing further difficulties in solving them altogether.

A positive reward of $1$ is provided every timestep as long as the pole stays within a pre-defined ``upright'' range of 15 degrees from vertical \citep{barto1983neuronlike,1606.01540}. This reward is not sparse, but is still far from optimal in terms of \emph{credit assignment} since it does not distinguish between ``really'' upright positions and dangerous ones where the pole is yet about to fall. 
To design a properly distinguishing reward shaping obviously requires much expert knowledge of the underlying physics.
Therefore, automatic reward shaping on the distribution of CartPoles is of much significance.

\textbf{Basic Training Settings:}
We modify the CartPole environment in OpenAI Gym \citep{1606.01540} so that it accepts pole length as a construction parameter and changes the physical dynamics accordingly. The pole length is uniformly sampled within the range of $[0.25, 5.00]$ and defines a distribution over CartPoles. All the state spaces $\mathcal{S} \subset \mathbb{R}^4$. We use the discrete two-action setting (a fixed amount of force to the left or right) and the
aforementioned original reward during meta-training. Episodes terminate after 200 timesteps, so the maximum achievable return is 200.

For the dueling-DQN we use an MLP with two hidden layers of size 32, followed by one linear layer for the advantage head and one for the value head to aggregate the output Q-values as Eqn.~\eqref{eqn:ddqn}. The prior $\theta$ is meta-trained with Alg.~\ref{rsml} for 500 meta iterations with 10 sampled tasks per iteration. Note that the tasks are merely used for the meta-update in Alg.~\ref{rsml} with no performance guarantee on single tasks. All results are taken across five random seeds from $0$ to $4$.

Intuitively, Alg.~\ref{rsml} is learning to generalize over different dynamics to assess how good/bad a state is.

\textbf{Meta-Testing with Advantage Head:}
We first test the case of \textit{adaptation with advantage head} as per Sec.~\ref{sec:meta-test}, where test tasks share the action space with meta-training tasks. The meta-trained prior $\theta$ is evaluated on 40 newly sampled unseen discrete CartPoles with Alg.~\ref{meta-test} to see how fast and how well the potential function (value head), as well as the advantage head, adapts to each new task after re-initializing their weights to $\theta$. As mentioned in Sec.~\ref{sec:meta-test}, we compare with the meta-testing procedure of MAML as a baseline, keeping $\theta$ and all common hyperparameters the same. 

We track the episodic returns of the agent 
after each gradient update step, aggregate all such returns across different meta-test tasks and different runs, and plot their medians and quartiles in Fig.~\ref{fig:pole} (left). 
As can be seen,
our method performs better than MAML, achieving the max 200 two times faster (in 4 steps c.f. 8 steps) and oscillates milder, with improvement even clearer in Sec.~\ref{sec:exp-grid}. The relatively high initial return also indicates the quality of the meta-learned prior on new tasks. 
While MAML could also exploit the prior over the entire model, it's with the additional reward shaping that our method could adapt and learn on new tasks faster.
Note that oscillation could not be completely avoided since it's to some extent inherent to off-policy RL algorithms, as is shown in later experiments.

\textbf{Meta-Testing from Discrete to Continuous:}
We then test the \textit{shaping only} case as per Sec.~\ref{sec:meta-test}.
With $V_\theta$ directly used for reward shaping zero-shot, we train: (1)  a vanilla DQN with randomly initialized weights on discrete CartPoles, corresponding to situations where the advantage head could not be used, and (2) a deterministic policy network using DDPG on continuous CartPoles, corresponding to situations where meta-test tasks have \emph{different} actions spaces, which disqualifies almost all existing meta-learning methods.

The vanilla DQN has only two hidden layers of size 32 without dueling. 
It's randomly re-initialized for each test task, and we track and plot the test progress similarly as before, except that we evaluate episodic returns every 100 updates. Naturally, we compare with training the same vanilla DQN with the same common hyperparameters but without any reward shaping to test the effectiveness of the meta-learned reward shaping. As shown in Fig.~\ref{fig:pole} (middle),
the zero-shot reward shaping still significantly boost the learning process on new tasks, achieving the max 200 remarkably faster while ``without shaping'' hasn't achieved yet.

To test with continuous action, 
we further modify the CartPole environment to accept a scalar real value as action, whose sign determines the direction and absolute value determines the force magnitude. We use a deterministic policy also with two hidden layers of size 32, and an additional two-hidden layer critic network for DDPG. 
Similar as with the vanilla DQN, we run DDPG with or without our reward shaping. As shown in Fig.~\ref{fig:pole} (right), 
learning on new tasks is again significantly accelerated with our reward shaping. Note that because we apply \texttt{tanh} nonlinearity to the action output to bound the actions, the initial policy appears more stable with higher initial returns than in the discrete case. However, due to the non-optimal original reward in terms of credit assignment, DDPG without shaping confuses and struggles at first with returns dropping to below 25.

\subsection{Grid Games}
\label{sec:exp-grid}
Grid games are clean but still challenging environments for model-free RL agents in terms of navigation and planning, especially when using neural nets as the agent model \citep{tamar2016value} since tabular representations could not generalize across grids. 
Many real-world environments could be modeled as grids in 2D or 3D. While represented simple, grids could have many variations with different start and goal positions on a $8 \times 8$ grid incurring $64 \times 63 = 4032$ different tasks. Introducing additional obstacles on the maps leads to combinatorial explosion of further possibilities.

Furthermore, grids almost always come with sparse rewards with rewards obtained only in novel states like goals or traps.
Such rewards are probably the most difficult for credit assignment, 
and to manually design reward shapings requires full access to the environment model and much human knowledge and heuristics which usually pre-compute the shortest paths or some distance metrics.
Therefore, it's very important to study automatic reward shaping on the distribution of grid games. 

We randomly generate grid maps specifying start and goal positions and possibly obstacles and traps. Agents start from the start position, move in the four canonical top, down, left and right directions and only receive a positive reward of 1 upon reaching the goal. The discount factor assures that the optimal V-/Q-values display certain notion of shortest path. Episodes terminate if the agent hasn't reached the goal in certain timesteps (50 in our experiments).

We use the same representations for start, goal and obstacles respectively across different maps, so intuitively, Alg.~\ref{rsml} learns to recognize and generalize concepts of map positions and, more importantly, the notion of shortest path to goal.

\begin{figure}[t]
    \centering
    \parbox{0.48\columnwidth}{
    \includegraphics[height=2.5cm]{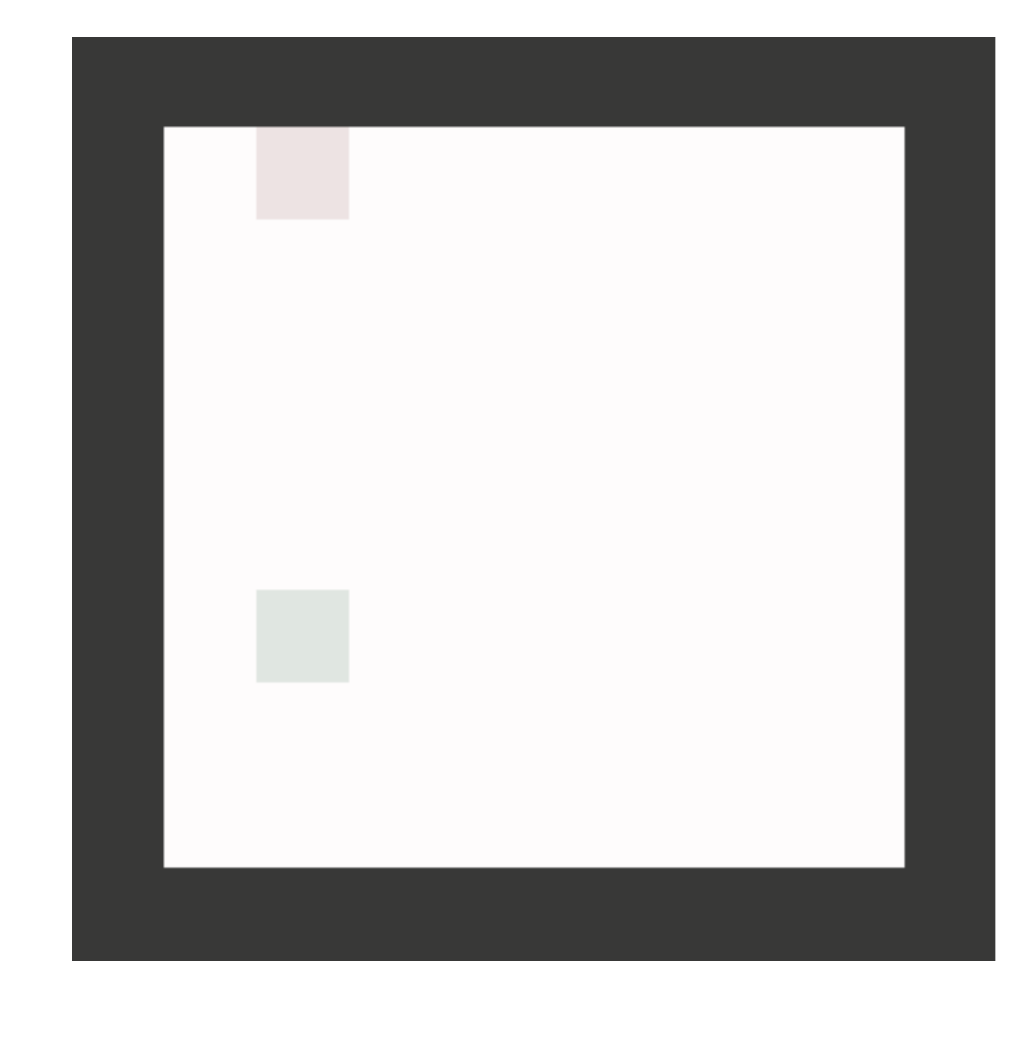}
    }
    \parbox{0.48\columnwidth}{
    \includegraphics[height=2.5cm]{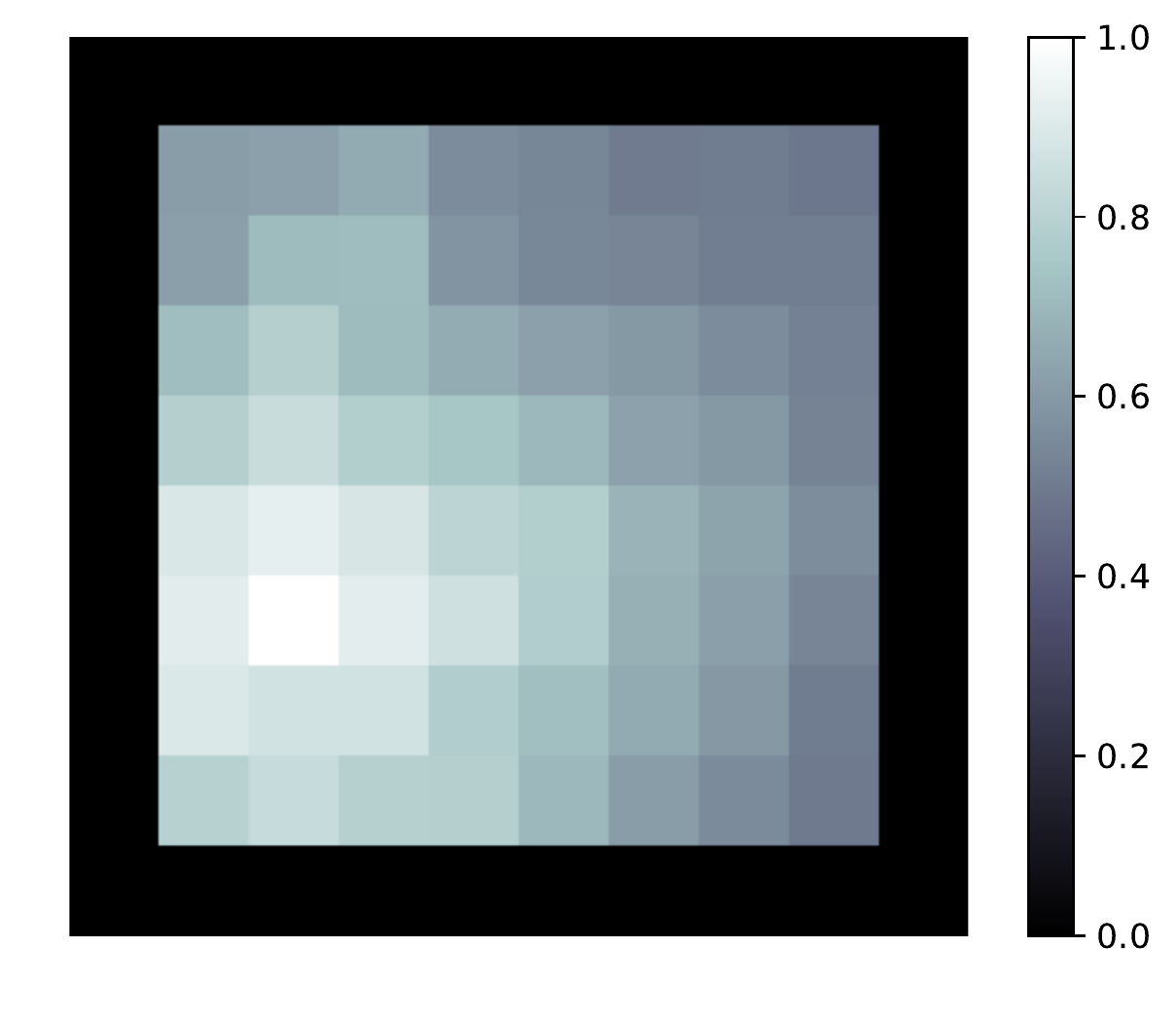}
    }
    \caption{
    Left: One instance of the clean maps with only the start position (red) and goal (green).
    Right: Visualization of the meta-learned prior (V-values) on the left map. It matches the intuition that the closer to goal, the higher the value, and is expectedly not optimal yet with values not strictly symmetric w.r.t. goal.
    }
    \label{fig:find-map}
\end{figure}

\begin{figure}[t]
    \parbox{0.48\columnwidth}{
    \includegraphics[width=0.48\columnwidth]{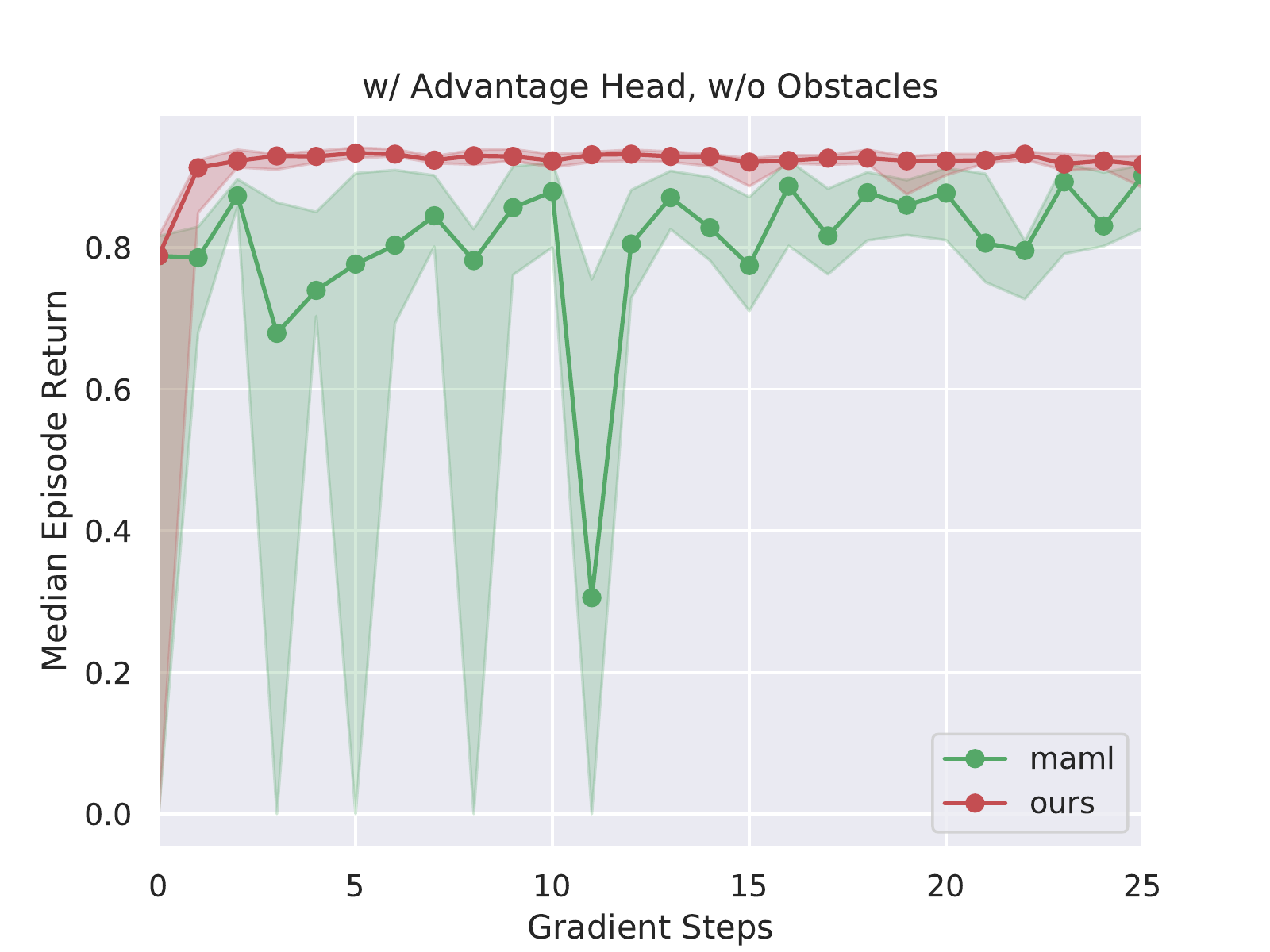}
    }
    \parbox{0.48\columnwidth}{
    \includegraphics[width=0.48\columnwidth]{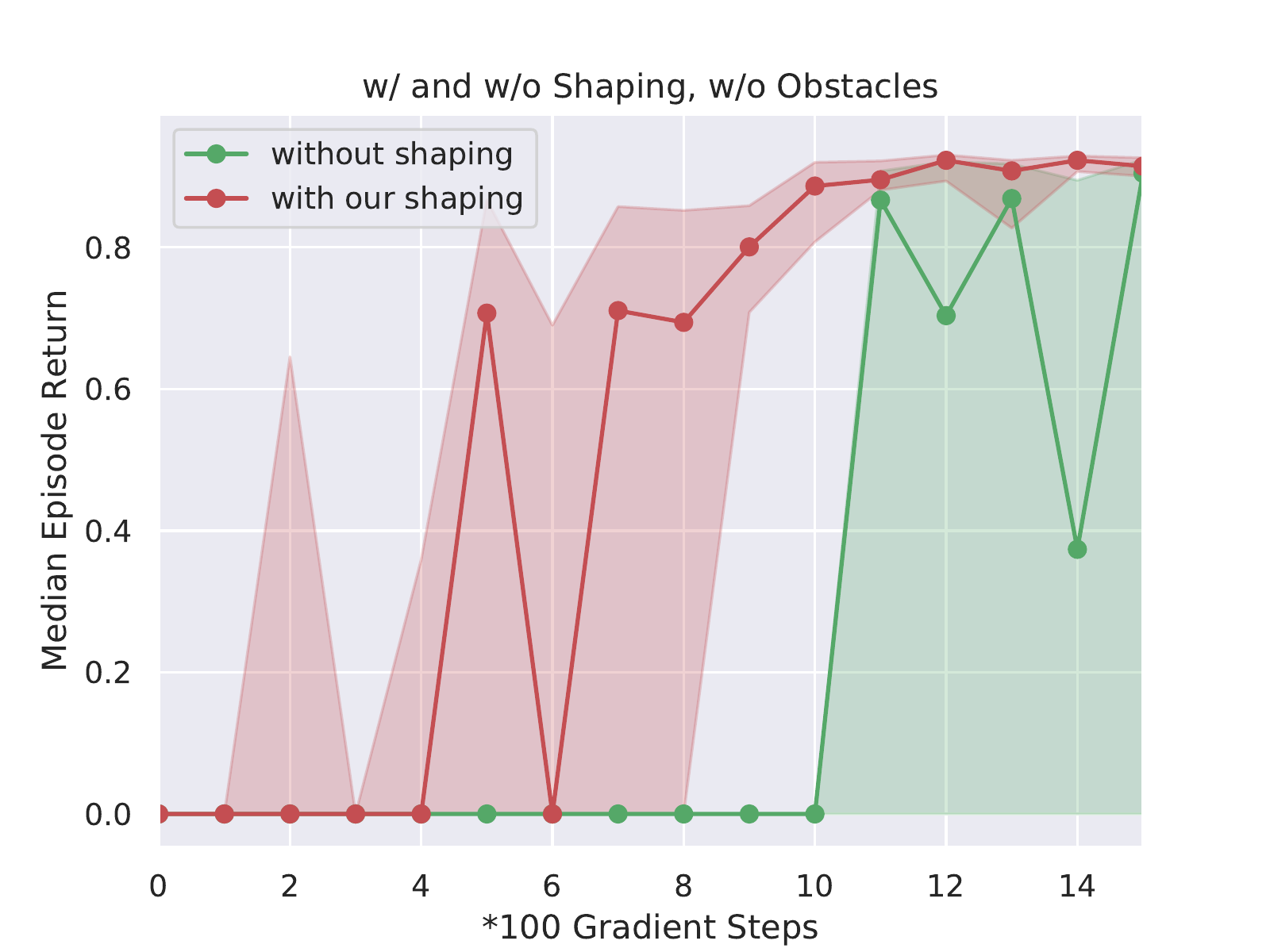}
    }
    \caption{
    Median and quartile return curves of meta-testing on newly sampled tasks on clean maps. We again uniformly outperform baselines in learning efficiency and stability.
    Left: meta-testing with advantage head.
    Right: learning randomly initialized vanilla DQNs with (red) and without (green) our meta-learned zero-shot reward shaping.
    }
    \label{fig:find}
\end{figure}

\textbf{Grid Games with Clean Maps:}
We first experimented with a simpler version of grid mazes with only start and goal positions but no obstacles. We generate 800 such maps of size $10 \times 10$ (e.g., Fig. \ref{fig:find-map} (left)) for meta-training, where all state spaces $\mathcal{S} \subset \mathbb{R}^{10 \times 10 \times 4}$ with the last dimension corresponding to the 4 channels of 0-1 maps of start, goal and current position as well as obstacles (all 0 in this case).

For the dueling-DQN we use a CNN with four convolutional layers with 32 kernels of $3 \times 3$ and stride 1, followed by two fully connected layers and then the dueling module. Meta-training is conducted similarly as in  Sec. \ref{sec:exp-pole}. 

We also meta-test the two cases as per Sec. \ref{sec:meta-test}: \textit{adaptation with advantage head} from the whole prior $\theta$, and \textit{shaping only} to train a vanilla DQN with zero-shot $V_\theta$ shaping. We mainly follow the procedure as in Sec. \ref{sec:exp-pole}, except that we don't construct continuous-action grids. All common hyperparameters are the same between any pair of our method and baseline.

As can be seen from Fig. \ref{fig:find},
our method performs much better in both cases of meta-testing in terms of learning efficiency and stability, displaying the high potential of our method in scaling to complex environments and agent models. Visualization of the learned V-values on an unseen map (Fig. \ref{fig:find-map} (right)) also justifies the meta-learned prior $\theta$.

\textbf{Grid Games with Obstacles:}
\begin{figure}[t]
    \centering
    \parbox{0.48\columnwidth}{
    \includegraphics[height=2.5cm]{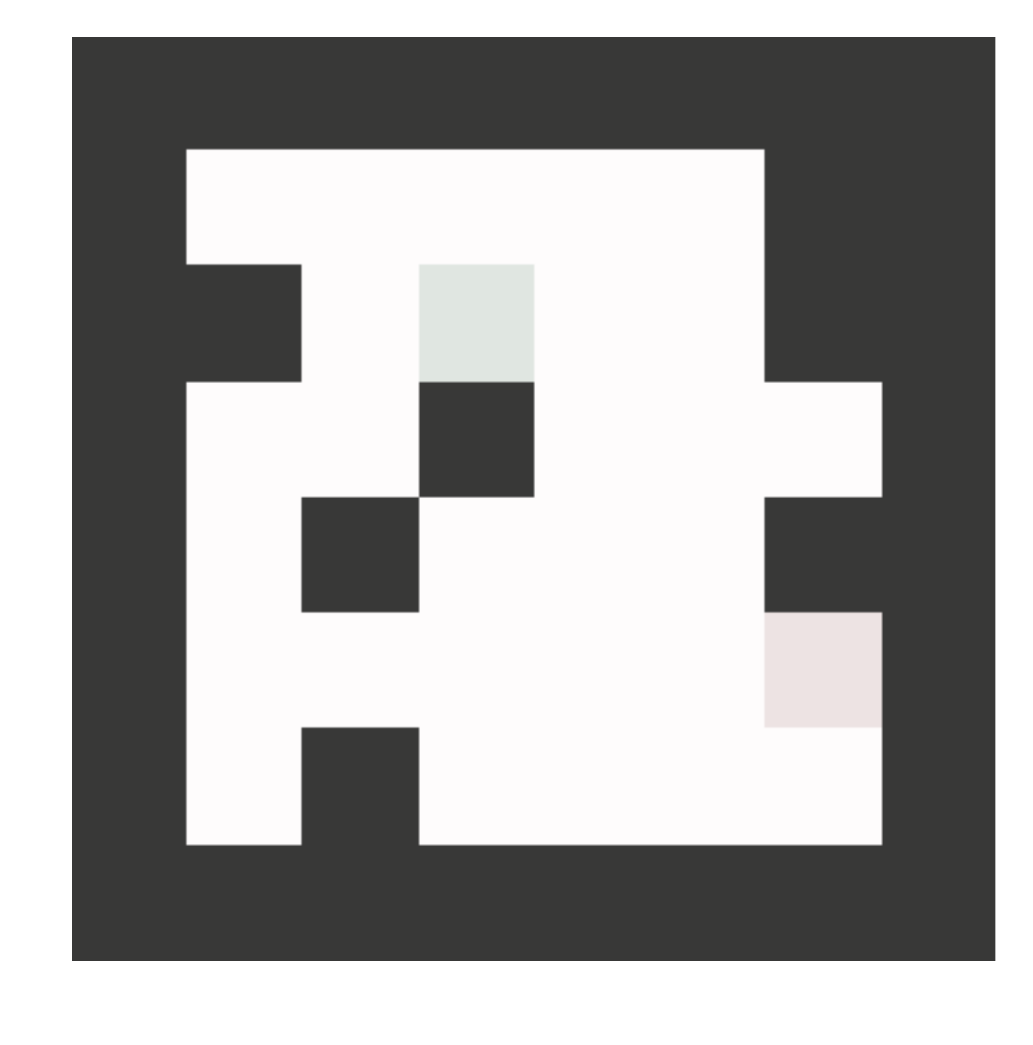}
    }
    \parbox{0.48\columnwidth}{
    \includegraphics[height=2.5cm]{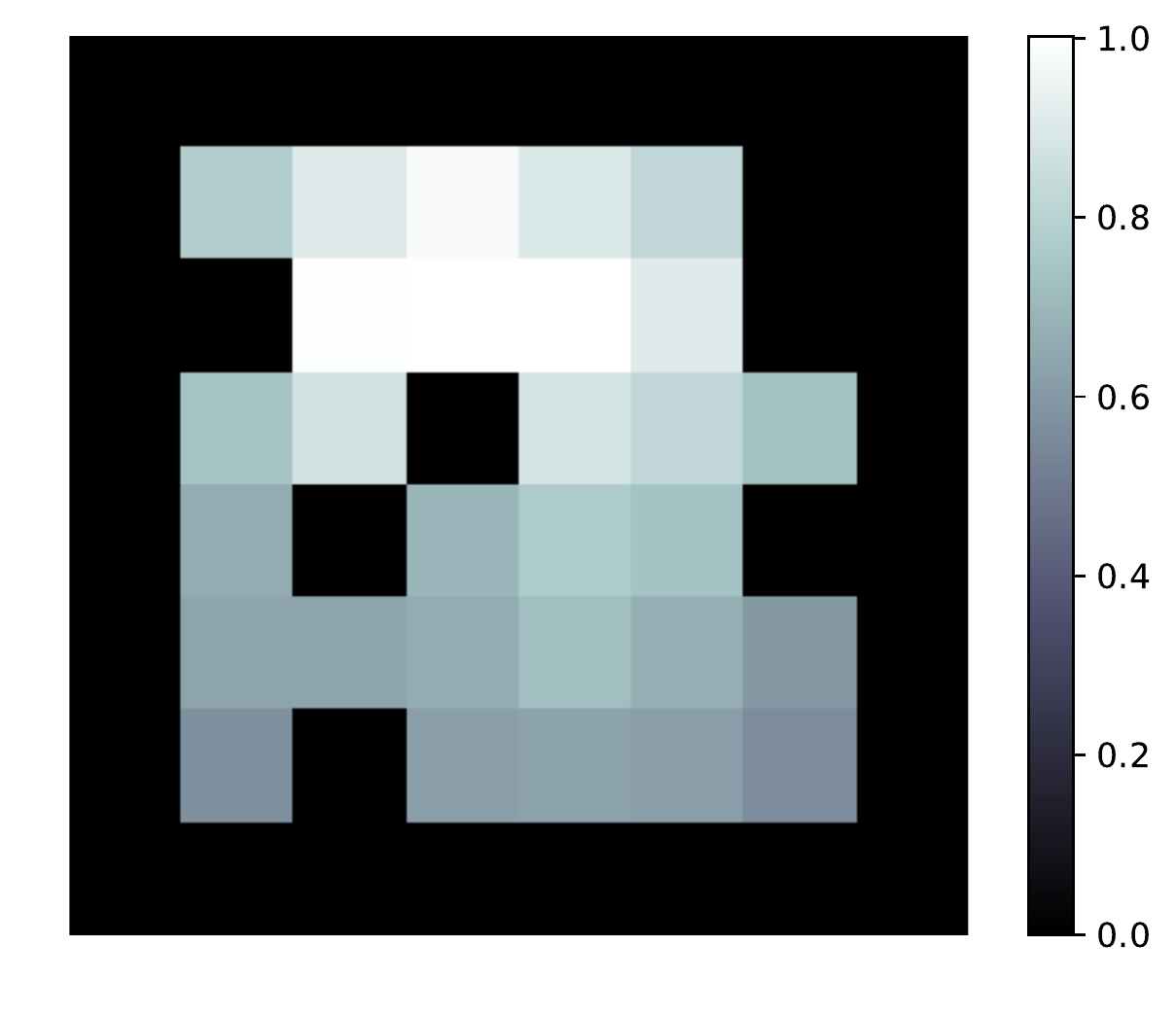}
    }

    \caption{
    Left: One instance of the maps with obstacles.
    Right: Visualization of the meta-learned prior (V-values) on the left map. It also matches the shortest path intuition with non-optimal, not strictly symmetric values.
    }
    \label{fig:maze-map}
\end{figure}
\begin{figure}[t]
    \parbox{0.48\columnwidth}{
    \includegraphics[width=0.48\columnwidth]{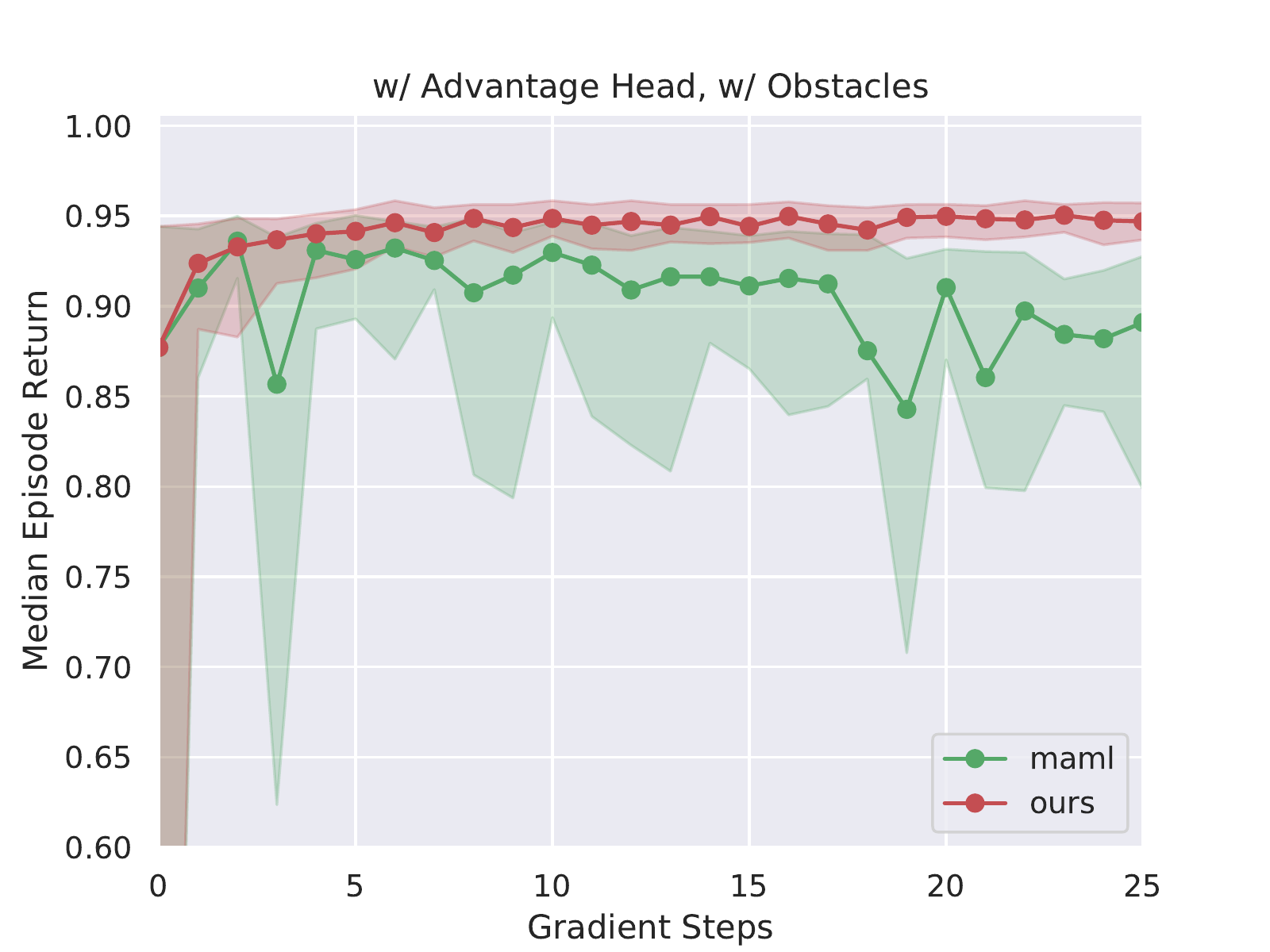}
    }
    \parbox{0.48\columnwidth}{
    \includegraphics[width=0.48\columnwidth]{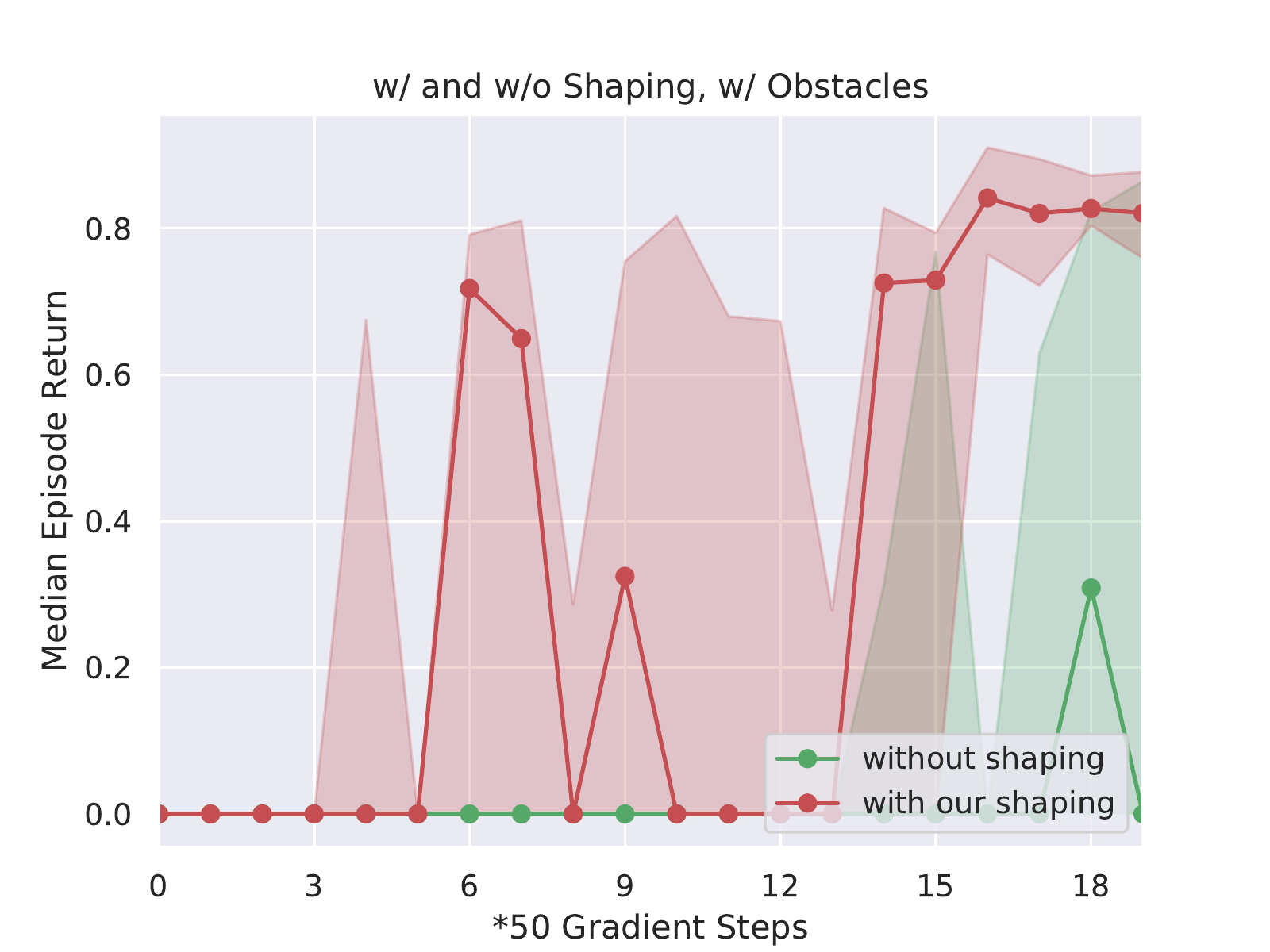}
    }
    \caption{
    Median and quartile return curves of meta-testing on newly sampled tasks on maps with obstacles. We constantly outperform baselines in learning efficiency and stability.
    Left: meta-testing with advantage head.
    Right: learning randomly initialized vanilla DQNs with (red) and without (green) our meta-learned zero-shot reward shaping.
    }
    \label{fig:maze}
\end{figure}
We then experimented with a fuller version of grid mazes where obstacles may be present at each grid position with probability 0.2 during map generation. We generate 4000 such maps of size $8 \times 8$ (e.g., Fig. \ref{fig:maze-map} (left)) for meta-training, so all state spaces $\mathcal{S} \subset \mathbb{R}^{8 \times 8 \times 4}$. We used the same convolutional dueling-DQN architecture as on clean maps, and the same meta-training/-testing protocols.

As can be seen from Fig. \ref{fig:maze},
our method constantly learns more efficiently than the baselines in both cases of \textit{adaptation with advantage head} and \textit{shaping only} on new tasks. The meta-learned $V_\theta$ of an unseen map also passes intuitive sanity check (Fig. \ref{fig:maze-map} (right)). 
Oscillation is a bit more severe than before due to the harder tasks and off-policy algorithmic nature, but ours is still superior in relative performance and stability.

\vspace{-2ex}
\section{Conclusions}
In this paper, we consider the problem of reward shaping on a distribution of tasks. We first prove the optimality of optimal V-values for potential-based reward shaping in terms of credit assignment.
We then propose a meta-learning algorithm to learn a flexible prior over the optimal V-values. The prior could be well applied directly to shape rewards and could also quickly adapt to the task-posterior optimum while solving the task. We provide additional theoretical guarantee for the latter case. Meanwhile, our framework only assumes that the state spaces of the task distribution are shared, leaving wide possibilities for potential applications.
Extensive experiments demonstrate the effectiveness of our method in terms of learning efficiency and stability on new tasks. 
We plan to consider adapting the shaping prior without the advantage head, and also single-task setting in the future.

\section*{Acknowledgement}

Haosheng Zou would personally like to thank his beautiful and sweet wife, Jiamin Deng, for her incredible suppport during the whole process of this paper by not being around most of the time.


\bibliography{main}
\bibliographystyle{icml2019}

\clearpage
\appendix
\onecolumn

\section{Proof of Thm. \ref{thm:adapt}}
\label{sec:proof}

\begin{proof}

(Part I)
{\it Eqn. \eqref{first_step} optimizes for the optimal policy.}




First, note that Eqn. \eqref{first_step}, 
$$\phi_j \leftarrow \phi_j - \alpha \nabla_{\phi_j}\|R^{\prime}_j(s, a, s^\prime) + \gamma \max_{a^\prime} A_{\phi_j}(s^\prime, a^\prime) \notag - A_{\phi_j}(s, a)\|^{2},$$
is naturally minimizing an objective in the form of TD error:

\begin{equation}
\label{eqn:first-step-obj}
    \|R^\prime_j (s, a, s^\prime) + \gamma \max_{a^\prime} A_{\phi_j}(s^\prime, a^\prime) - A_{\phi_j}(s, a)\|^2.
\end{equation}

Bearing in mind that in Alg. \ref{meta-test} we use $\Phi(s) = V_{\phi_j}(s)$ as the potential-based shaping function to obtain $R^{\prime}_j(s, a, s^\prime)$, we can rewrite objective \eqref{eqn:first-step-obj} as:
\begin{equation*}
    \begin{split}
        & \|R^\prime_j (s, a, s^\prime) + \gamma \max_{a^\prime} A_{\phi_j}(s^\prime, a^\prime) - A_{\phi_j}(s, a)\|^2 \\
        = & \|\big(R_j (s, a, s^\prime) + \gamma V_{\phi_j}(s^\prime) - V_{\phi_j}(s)\big) + \gamma \max_{a^\prime} A_{\phi_j}(s^\prime, a^\prime) - A_{\phi_j}(s, a)\|^2 \\
        = & \|R_j (s, a, s^\prime) + \gamma (\max_{a^\prime} A_{\phi_j}(s^\prime, a^\prime) + V_{\phi_j}(s^\prime)) - (A_{\phi_j}(s, a) + V_{\phi_j}(s))\|^2 \\
        = & \|R_j(s, a, s^\prime ) + \gamma \max_{a^\prime} Q_{\phi_j}(s^\prime, a^\prime) - Q_{\phi_j}(s, a)\|^2,
    \end{split}
\end{equation*}
where $A_{\phi_j}(\cdot, \cdot) + V_{\phi_j}(\cdot) = Q_{\phi_j}(\cdot, \cdot)$ simply because it's neural-network computation of the dueling architecture.

Now we've already arrived at exactly the Q-learning TD error on the original MDP $M_j$:

\begin{equation}
    \mathcal{L}_{\mathcal{T}_j}(Q_{\phi_j}) = \|R_j(s, a, s^\prime ) + \gamma \max_{a^\prime} Q_{\phi_j}(s^\prime, a^\prime) - Q_{\phi_j}(s, a)\|^2.
\end{equation}

Therefore,
Eqn. \eqref{first_step} is in essence minimizing the Q-learning TD error on $M_j$, thus optimizing for the optimal policy (invariant with/without the potential-based reward shaping). 

\textbf{Remark:}
As an alternative understanding, 
first note that $A_{\phi_j}(s, a)$ is just a notation for the neural-network head. If we view it as an estimator of $Q_{M_j^\prime}^*(s, a)$, then
Eqn. \eqref{first_step} is actually performing Q-learning on the shaped MDP $M_j^\prime$, with objective \eqref{eqn:first-step-obj} directly being the corresponding TD error.
It is therefore still optimizing for the invariant optimal policy.

(Part II)
{\it Eqn. \eqref{second_step} optimizes for the task-posterior $V_{M_j}^{*}(s)$.}

Let $\phi_j^\prime = \arg\min_{\phi_j}\|R_j^\prime (s, a, s^\prime) + \gamma \max_{a^\prime} A_{\phi_j}(s^\prime, a^\prime) - A_{\phi_j}(s, a)\|^2$,
i.e., assume Eqn. \eqref{first_step} optimizes to minimum the parameters that it has gradients on, and get $\phi_j^\prime$. Following the remark in Part I, we have 
\begin{equation}
\label{eqn:p2-asmpt1}
    A_{\phi_j^\prime}(s^\prime, a^\prime) = Q_{M_j^\prime}^*(s, a)
\end{equation}
from Q-learning on the shaped MDP $M_j^\prime$.

We also rearrange Eqn. \eqref{relation} with the adopted $\Phi(s) = V_{\phi_j}(s)$ to get:
\begin{equation}
\label{eqn:p2-from-invariance}
    Q_{M_j^\prime}^*(s, a) + V_{\phi_j}(s) = Q_{M_j}^*(s, a).
\end{equation}

Substituting Eqn. \eqref{eqn:p2-asmpt1} into \eqref{eqn:p2-from-invariance},
we get:
\begin{equation}
         Q_{M_j}^*(s, a) = Q_{M_j^\prime}^*(s, a) + V_{\phi_j}(s)
         = A_{\phi_j^\prime}(s, a) + V_{\phi_j}(s).\\
\end{equation}

Note that by definition,
\begin{equation}
    Q_{M_j}^*(s, a) = V_{M_j}^*(s) + A_{M_j}^{*}(s, a).
\end{equation}

So we have
\begin{equation}
\label{eqn:p2-penulti}
    V_{M_j}^*(s) + A_{M_j}^{*}(s, a) = A_{\phi_j^\prime}(s, a) + V_{\phi_j}(s).
\end{equation}

Taking $\max_a$ on both sides of Eqn. \eqref{eqn:p2-penulti}, we get
\begin{align}
    V_{M_j}^*(s) + \max_a A_{M_j}^{*}(s, a) = & \max_a A_{\phi_j^\prime}(s, a) + V_{\phi_j}(s) \notag\\
    V_{M_j}^*(s) = & \max_a A_{\phi_j^\prime}(s, a) + V_{\phi_j}(s),
    \label{eqn:p2-v}
\end{align}
where $\max_a A_{M_j}^{*}(s, a) = \max_a  Q_{M_j}^*(s, a) - V_{M_j}^*(s) = 0$ holds by definition. 

In this way, we transform the inaccessible $V_{M_j}^*(s)$ into the computable 
$\max_a A_{\phi_j^\prime}(s, a) + V_{\phi_j}(s)$, and
to adapt $V_{\phi_j}(s)$ to the task-posterior $V_{M_j}^*(s)$ one should minimize 
\begin{equation}
    \|V_{\phi_j}(s)) - \texttt{stop\_gradient} \big(\max_a A_{\phi_j^\prime}(s, a) + V_{\phi_j}(s) \big)\|^2,
\end{equation}
where we stop the gradients because the latter part should be treated as a scalar learning target.

Therefore, Eqn. \eqref{second_step} is indeed optimizing for the task-posterior $V_{M_j}^{*}(s)$.

\textbf{Remark:}
Here we assume $\phi_j$ is optimized to the final $\phi_j^\prime$. In practice this is not necessary nor desired, preventing fast adaptation.
Therefore, we take only one step of Eqn. \eqref{first_step}, and alternate between one step of Eqn. \eqref{first_step} and one step of Eqn. \eqref{second_step}, where one pair constitutes one update step in Fig. \ref{fig:find} and \ref{fig:maze} (left).

Also note that $A_{\phi_j}$ and $V_{\phi_j}$ may or may not share parameters, and $\phi_j^\prime$ only corresponds to the parameters that Eqn. \eqref{first_step} has gradients on, so we keep the separate notations of $\phi_j$ and $\phi_j^\prime$. From the above derivation, we can see that Eqn. \eqref{eqn:p2-v} holds for arbitrary $\phi_j$, so nothing is violated if some parameters of $\phi_j$ is updated by Eqn. \eqref{first_step}.
\end{proof}

\end{document}